\newcommand*{\newletterthm@internal}{}%Dummy definition to use \renewcommand later
\newcommand*{\newletterthm}[1]{%
  \def\newletterthm@name{#1}%
  \renewcommand*{\newletterthm@internal}[1][]{%
    \ifthenelse{\isempty{##1}}{%
      \expandafter\expandafter\expandafter\newtheorem%
      \expandafter\expandafter\expandafter{%
        \expandafter\newletterthm@name%
        \expandafter}%
      \expandafter{%
        \newletterthm@text}%
      \expandafter\renewcommand%
      \expandafter*%
      \expandafter{%
        \csname the#1\endcsname}{\Alph{#1}}%
    }{%
      \expandafter\expandafter\expandafter\newtheorem%
      \expandafter\expandafter\expandafter{%
        \expandafter\newletterthm@name%
        \expandafter}%
      \expandafter{%
        \newletterthm@text}[##1]%
    \expandafter\renewcommand%
        \expandafter*%
        \expandafter{%
          \csname the#1\endcsname}{\csname the##1\endcsname.\Alph{#1}}%
    }%
  }%
  \newletterthm@newthm%
}
\newcommand*{\newletterthm@newthm}[2][]{%
  \ifthenelse{\isempty{#1}}{%
    \def\newletterthm@text{#2}%
    \newletterthm@internal%
  }{%
    \expandafter\newtheorem\expandafter{\newletterthm@name}[#1]{#2}%
  }%
}
\newtheoremstyle{thmstyle}%style name
  {\medskipamount}%space before
  {\smallskipamount}%space after
  {\slshape}%font used
  {0pt}%indentation
  {\bfseries}%modifier theorem head
  {.}%punctuation between theorem head and body
  { }%space after punctuation
  {\thmname{#1}\thmnumber{ #2}{\normalfont\thmnote{ (#3)}}}%theorem specifier
\newtheoremstyle{plainstyle}%style name
  {\medskipamount}%space before
  {\smallskipamount}%space after
  {\rmfamily}%font used
  {0pt}%indentation
  {\bfseries}%modifier theorem head
  {.}%punctuation between theorem head and body
  { }%space after punctuation
  {\thmname{#1}\thmnumber{ #2}{\normalfont\thmnote{ (#3)}}}%theorem specifier
\theoremstyle{thmstyle}
\newtheorem{theorem}{Theorem}[section]
\newtheorem{lemma}[theorem]{Lemma}
\newtheorem{question}[theorem]{Question}
\theoremstyle{plainstyle}
\newtheorem{definition}[theorem]{Definition}
\newtheorem{remark}[theorem]{Remark}
\def\refdescformat#1{%
  \phantomsection%
  \let\oldlabel\label%
  \let\label\@gobble%
  \edef\@currentlabel{#1}% Label format (in \ref).
  \let\label\oldlabel%
  #1:% Item format (in description).
}
\newlist{refdesc}{description}{1}
\setlist[refdesc]{format={\refdescformat}}
\newlist{enumdef}{enumerate}{1}
\setlist[enumdef]{before={\leavevmode}, label={\arabic*.}, ref={\thetheorem.\arabic*}}
\setlist[enumerate]{label={\roman*.}, ref={(\roman*)}} % Provides default enumitem arguments for enumerate
\newcommand\notoc@internal[2][]{}%Dummy definition to use \renewcommand later
\newcommand{\notoc}[1]{%Executes the next command (such as \section, \subsection, etc.) without adding it to toc.
  \renewcommand{\notoc@internal}[2][]{%
    \let\old@addtocontents\addtocontents%
    \let\addtocontents\@gobbletwo%
    \ifthenelse{\isempty{##1}}{%
      #1{##2}%
    }{%
      #1[##1]{##2}%
    }%
    \let\addtocontents\old@addtocontents%
  }%
  \notoc@internal%
}
\numberwithin{equation}{section} % Changes equation numbering to be within section.
\let\epsilon\varepsilon
\newcommand{\rn}{\bm}
\newcommand{\df}{\stackrel{\text{def}}{=}}
\newcommand{\place}{\mathord{-}}
\def\symdiff{\mathbin{\triangle}}
\newcommand{\comp}{\mathbin{\circ}}
\newcommand{\rest}{\mathord{\vert}}
\newcommand{\floor}[1]{\ensuremath{\lfloor#1\rfloor}}
\newcommand{\ceil}[1]{\ensuremath{\lceil#1\rceil}}
\newcommand{\Floor}[1]{\ensuremath{\left\lfloor#1\right\rfloor}}
\newcommand{\Ceil}[1]{\ensuremath{\left\lceil#1\right\rceil}}
\DeclareMathOperator{\rk}{rk}
\DeclareMathOperator{\PAC}{PAC}
\DeclareMathOperator{\HP}{HP}
\DeclareMathOperator{\VC}{VC}
\DeclareMathOperator{\Nat}{Nat}
\DeclareMathOperator{\VCN}{VCN}
\DeclareMathOperator{\DS}{DS}
\DeclareMathOperator{\dom}{dom}
\newcommand{\kpart}[1][k]{#1\operatorname{-part}}
\DeclareMathOperator{\ev}{ev}
\newcommand{\EE}{\mathbb{E}}
\newcommand{\NN}{\mathbb{N}}
\newcommand{\PP}{\mathbb{P}}
\newcommand{\RR}{\mathbb{R}}
\newcommand{\One}{\mathbbm{1}}
\newcommand{\cA}{\mathcal{A}}
\newcommand{\cB}{\mathcal{B}}
\newcommand{\cC}{\mathcal{C}}
\newcommand{\cE}{\mathcal{E}}
\newcommand{\cF}{\mathcal{F}}
\newcommand{\cH}{\mathcal{H}}
\title{%
  A packing lemma for $\VCN_k$-dimension\\
  and learning high-dimensional data
}
\author{%
  Leonardo N.~Coregliano \and%
  Maryanthe Malliaris\thanks{Research partially supported by NSF-BSF 2051825.}%
}
\date{\today}
\begin{document}
\maketitle

\begin{abstract}
  Recently, the authors introduced the theory of high-arity PAC learning, which is well-suited for learning graphs, hypergraphs
  and relational structures. In the same initial work, the authors proved a high-arity analogue of the Fundamental Theorem of
  Statistical Learning that almost completely characterizes all notions of high-arity PAC learning in terms of a combinatorial
  dimension, called the Vapnik--Chervonenkis--Natarajan ($\VCN_k$) $k$-dimension, leaving as an open problem only the
  characterization of non-partite, non-agnostic high-arity PAC learnability.

  In this work, we complete this characterization by proving that non-partite non-agnostic high-arity PAC learnability implies a
  high-arity version of the Haussler packing property, which in turn implies finiteness of $\VCN_k$-dimension. This is done by
  obtaining direct proofs that classic PAC learnability implies classic Haussler packing property, which in turn implies finite
  Natarajan dimension and noticing that these direct proofs nicely lift to high-arity.
\end{abstract}

\section{Introduction}
\label{sec:intro}

Consider the following formulation of the question ``are there few convex sets?'': Given $k\in\NN$ and $\epsilon > 0$, does
there exist $m = m(\epsilon)$ such for every probability measure $\mu$ over $\RR$, there exist $m$ convex sets
$A_1,\ldots,A_m\subseteq\RR^k$ such that every convex set $B\subseteq\RR^k$ is $\epsilon$-close to one of the $A_i$ in the
natural sense of the product measure $\mu^k$, i.e., $\mu^k(A_i\symdiff B) < \epsilon$ (note that $m$ depends only on $\epsilon$
and not on $\mu$)? More generally, we could ask:
\begin{question}\label{qst:compression}
  Which classes $\cH$ of subsets of $\RR^k$ admit such a ``compression property''?
\end{question}

The main result of this paper provides a complete answer to the general question in terms of a new notion of combinatorial
dimension (in particular, the convex sets do have this ``compression property''). To continue to illustrate this using the
example of convex sets, the reader familiar with the Haussler packing property might want to first consider the case $k=1$, in
which the class of convex sets amounts to the class of intervals; this class has finite VC~dimension and the result above is
exactly the Haussler packing property. However, when $k\geq 2$, the class of convex sets no longer has finite VC~dimension (even
in the plane, notice that $n$ points along a circle are easily shattered by convex sets), so usual Haussler theory provably does
not apply. In the present work, we will indeed use a general combinatorial dimension ($\VCN_k$-dimension) which specializes to
VC~dimension in the $k=1$ case, and we will obtain a characterization of this ``compression property'', which we call high-arity
Haussler packing property, in terms of finiteness of this dimension. In order to explain what is new and non-trivial about this
result, let us explain some ingredients of the argument.

The proof builds on a recent breakthrough technology of high-arity PAC learning~\cite{CM24+}. We explain this formally below, but
briefly, this theory allows for statistical learning to happen in much more complex settings than classical PAC theory. The key
ingredient is understanding how to leverage structured-correlation in high-dimensional data to make new kinds of learning
possible.

In fact, the results of the present paper answer a major open question of the high-arity PAC paper, by closing an equivalence
of the high-arity PAC theory as we will explain below.

\section{Technical preliminaries}

In the PAC learning theory of Valiant~\cite{Val84} (see also~\cite{SB14} for a more thorough and modern introduction to the
topic), an adversary picks a function $F\colon X\to Y$ out from some family $\cH\subseteq Y^X$ and a probability measure $\mu$
over $X$ and we are tasked to learn $F$ from i.i.d.\ samples of the form $(\rn{x}_i,F(\rn{x}_i))_{i=1}^m$, where each $\rn{x}_i$
is drawn from $\mu$; our answer is required to be probably approximately correct (PAC) in the sense of having small total loss
with high probability over the sample $\rn{x}$. The Fundamental Theorem of Statistical Learning characterizes PAC learnability
of a family $\cH$ in terms of finiteness of:
\begin{itemize}
\item its Vapnik--Chervonenkis ($\VC$) dimension~\cite{VC71,BEH89,VC15} when $\lvert Y\rvert=2$,
\item its Natarajan dimension~\cite{Nat89} when $Y$ is finite,
\item its Daniely--Shalev-Shwartz ($\DS$) dimension~\cite{DS14,BCDMY22} in the general case.
\end{itemize}
See Theorem~\ref{thm:FTSL} below for the case when $Y$ is finite.

In fact, Haussler~\cite{Hau92} showed (when $Y$ is finite, but the result was later extended to the general
setting~\cite{BCDMY22}) that the above is also equivalent to agnostic PAC learnability of $\cH$, that is, even if we allow our
adversary to pick instead a probability measure $\nu$ over $X\times Y$ and provide us i.i.d.\ samples from $\nu$, then we can at
least be competitive in the sense that our answer will have total loss close to the best performing member of $\cH$ (with high
probability over the sample).

In a different work, Haussler~\cite[Corollary~1, Theorem~2]{Hau95} also showed that finiteness of the Natarajan dimension of a
family $\cH\subseteq Y^X$ (with $Y$ finite) is equivalent to the following packing property\footnote{This is a slight
generalization and reformulation of Haussler's original work. Instead, Haussler's original work frames this in terms of
$\epsilon$-separated sets and considers only the case $Y=\{0,1\}$ (which yields a characterization in terms of the
$\VC$-dimension).}: for every $\epsilon > 0$, there exists $m=m(\epsilon)$ such that for every probability measure $\mu$ over
$X$, there exist $\cH'\subseteq\cH$ with $\lvert\cH'\rvert\leq m$ such that every $F\in\cH$ is $\epsilon$-close to some
$H\in\cH'$ in the sense that
\begin{equation*}
  \mu(\{x\in X\mid F(x)\neq H(x)\}) \leq \epsilon.
\end{equation*}

Collectively, these works yield the Fundamental Theorem of Statistical Learning:
\begin{theoremLet}[\cite{VC71,BEH89,Nat89,Hau92,Hau95,VC15}]\label{thm:FTSL}
  The following are equivalent for a family $\cH\subseteq Y^X$ of functions $X\to Y$ with $Y$ finite:
  \begin{enumerate}
  \item The Natarajan dimension of $\cH$ is finite.
  \item $\cH$ has the uniform convergence property.
  \item $\cH$ is agnostically PAC learnable.
  \item $\cH$ is PAC learnable.
  \item $\cH$ has the Haussler packing property.
  \end{enumerate}
\end{theoremLet}

One of the main criticisms of classic PAC learning theory is that it strongly relies on the independence of its samples. While
there has been considerable work~\cite{HL94,AV95,Gam03,ZZX09,SW10,ZXC12,ZLX12,ZXX14,BGS18,SS23} in extending PAC theory to allow
for some correlation, these works see correlation as an obstacle to learning. More recently, the authors introduced the theory
of high-arity PAC learning~\cite{CM24+}, in which correlation is leveraged to increase the learning power.

High-arity PAC learning theory is heavily inspired by the problem of learning a graph over a set $X$ (but more generally, also
covers hypergraphs and even relational structures). By encoding such a graph by its adjacency matrix $F\colon X\times
X\to\{0,1\}$, one could simply apply classic PAC learning theory; however, this yields a rather unnatural learning framework:
the adversary is picking a measure $\mu$ over $X\times X$ and revealing several pairs $(\rn{x}_i,\rn{x'}_i)$ drawn i.i.d.\ from
$\mu$ along with their adjacency $F(\rn{x}_i,\rn{x'}_i)$.

Instead, the setup of $2$-PAC learning is much more natural: the adversary picks a measure $\mu$ over $X$, draws $m$ vertices
$(\rn{x}_i)_{i=1}^m$ i.i.d.\ from $\mu$ and reveals all the adjacency information between these vertices:
$(F(\rn{x}_i,\rn{x}_j))_{i,j=1}^m$ (see Section~\ref{sec:simpleold} for a formal, but simplified definition and see
Appendix~\ref{sec:higharityPACdefs} for the full definitions).

In high-arity, there is also a natural partite framework. Namely, if we were trying to learn a bipartite graph $F\colon
X_1\times X_2\to\{0,1\}$ with a known bipartition $(X_1,X_2)$, we could instead allow the adversary to pick different measures
$\mu_1$ and $\mu_2$ over $X_1$ and $X_2$, respectively, draw $(\rn{x}^1_i)_{i=1}^m$ i.i.d.\ from $\mu_1$ and
$(\rn{x}^2_i)_{i=1}^m$ i.i.d.\ from $\mu_2$, independently from the previous points and provide us all adjacency information:
$(F(\rn{x}^1_i,\rn{x}^2_j))_{i,j=1}^m$.

Every (not necessarily bipartite) graph $F\colon X\times X\to\{0,1\}$ can be interpreted as a bipartite graph
$F^{\kpart[2]}\colon X_1\times X_2\to\{0,1\}$ in which $X_1=X_2=X$ (combinatorially, this is doubling the vertices of $F$), but
a priori $2$-PAC learnability of a family $\cH\subseteq\{0,1\}^{X\times X}$ is not necessarily the same as partite $2$-PAC
learnability of its partization $\cH^{\kpart[2]}\df\{F^{\kpart[2]} \mid F\in\cH\}$.

The interplay between a high-arity hypothesis class $\cH$ and its partization $\cH^{\kpart}$ plays a crucial role in proving
Theorem~\ref{thm:highFTSL} below, which is high-arity analogue of Theorem~\ref{thm:FTSL} (we direct the reader again to
Section~\ref{sec:simpleold} and Appendix~\ref{sec:higharityPACdefs} or to the original paper~\cite{CM24+} for the precise
definitions of the concepts in this theorem). To illustrate the non-triviality of the interplay between partite and non-partite,
consider the partite version of Question~\ref{qst:compression} in Section~\ref{sec:intro}: if we change the setup to allow for
product probability measures of the form $\mu_1\otimes\cdots\otimes\mu_k$, of not necessarily the same measure, why should we
expect the resulting property to be equivalent to the one only for measures of the form $\mu^k$?

\begin{theoremLet}[\protect{\cite[Theorems~1.1 and~5.1]{CM24+}}]\label{thm:highFTSL}
  Let $k\in\NN_+$. The following are equivalent for a family $\cH\subseteq Y^{X^k}$ of functions $X^k\to Y$ with $Y$ finite and
  its partization $\cH^{\kpart}$:
  \begin{enumerate}
  \item The Vapnik--Chervonenkis--Natarajan $k$-dimension of $\cH$ is finite.
  \item The partite Vapnik--Chervonenkis--Natarajan $k$-dimension of $\cH^{\kpart}$ is finite.
  \item $\cH^{\kpart}$ has the uniform convergence property.
  \item $\cH$ is agnostically $k$-PAC learnable.
  \item $\cH^{\kpart}$ is partite agnostically $k$-PAC learnable.
  \item $\cH^{\kpart}$ is partite $k$-PAC learnable.
  \end{enumerate}
  Furthermore, any of the items above implies the following:
  \begin{enumerate}[resume]
  \item\label{it:PAC} $\cH$ is $k$-PAC learnable.
  \end{enumerate}
\end{theoremLet}
(The statement above is the informal version~\cite[Theorem~1.1]{CM24+} that covers only the $0/1$-loss function; the formal
version~\cite[Theorem~5.1]{CM24+} covers general loss functions (under mild assumptions).)

It turns out that the implication of PAC learnability to finite Natarajan dimension (usually called No Free Lunch Theorem) of
classic PAC theory only lifts naturally to the partite setting. The authors were able to partially solve this issue by providing
a ``departization'' operation that allows non-partite agnostic $k$-PAC learnability to imply its partite analogue. However, this
left open the question of whether non-partite $k$-PAC learnability (item~\ref{it:PAC} in Theorem~\ref{thm:highFTSL}) can also be
included in the equivalence list.

In this paper, we prove that this is indeed the case by developing a high-arity version of the Haussler packing property (which
we refer to here as $k$-ary Haussler packing). Recall, the classic equivalence of the Haussler packing property with finiteness
of $\VC$-dimension can be proved by direct implications between the properties\footnote{In fact, this essentially goes back to
Haussler~\cite{Hau95}; however, he does not prove that the packing property implies finite $\VC$-dimension, he instead shows
tightness of an analogous bound when the domain $X$ is finite; but one can derive the implication from his results.}. In the
present work, we find high-arity proofs of the implications $k$-PAC learnability $\implies$ $k$-ary Haussler packing property
and $k$-ary Haussler packing property $\implies$ finite $\VCN_k$-dimension (Theorem~\ref{thm:main} below). The reader should
note that when specialized to $k=1$, these yield the expected direct implications from classic PAC to Haussler packing and from
Haussler packing to finite Natarajan dimension.

\begin{theoremLet}[informal version of Theorems~\ref{thm:simple:PAC->HP} and~\ref{thm:simple:HP->VCNk} of the present paper]%
  \label{thm:main}
  Let $k\in\NN_+$. Then the following hold for a family $\cH\subseteq Y^{X^k}$ of functions $X^k\to Y$ with $Y$ finite:
  \begin{enumerate}
  \item If $\cH$ is $k$-PAC learnable, then $\cH$ has the $k$-ary Haussler packing property.
  \item If $\cH$ has the $k$-ary Haussler packing property, then $\cH$ has finite $\VCN_k$-dimension.
  \end{enumerate}
\end{theoremLet}
Note, at the risk of stating the obvious, that two things have happened here: there is a new high-arity notion defined, that of
$k$-ary Haussler packing, and two new direct implications that close the loop of equivalences when put together with the earlier
Theorem~\ref{thm:highFTSL}. Thus an immediate consequence of the main result in this paper is the following summary theorem (see
Figure~\ref{fig:roadmap} for a pictorial view of the implications in this work and of~\cite{CM24+}):
\begin{theoremLet}\label{thm:fullhighFTSL}
  Let $k\in\NN_+$. The following are equivalent for a family $\cH\subseteq Y^{X^k}$ of functions $X^k\to Y$ with $Y$ finite and
  its partization $\cH^{\kpart}$:
  \begin{enumerate}
  \item\label{thm:fullhighFTSL:VCNk} The Vapnik--Chervonenkis--Natarajan $k$-dimension of $\cH$ is finite.
  \item The partite Vapnik--Chervonenkis--Natarajan $k$-dimension of $\cH^{\kpart}$ is finite.
  \item $\cH^{\kpart}$ has the uniform convergence property.
  \item $\cH$ is agnostically $k$-PAC learnable.
  \item $\cH^{\kpart}$ is partite agnostically $k$-PAC learnable.
  \item $\cH$ is $k$-PAC learnable.
  \item $\cH^{\kpart}$ is partite $k$-PAC learnable.
  \item\label{thm:fullhighFTSL:HP} $\cH$ has the $k$-ary Haussler packing property.
  \item $\cH^{\kpart}$ has the $k$-ary Haussler packing property.
  \end{enumerate}
\end{theoremLet}

In particular, the equivalence of items~\ref{thm:fullhighFTSL:VCNk} and~\ref{thm:fullhighFTSL:HP} above completely answer
Question~\ref{qst:compression}.

\medskip

The paper is organized as follows. First, since our main goal is to complete the characterization of (non-partite) $k$-PAC
learnability, we opt to provide in the main text only a simplified version of the high-arity PAC definitions in~\cite{CM24+} and
of our argument that only covers ``rank at most $1$'' hypotheses in the non-partite setting, ignoring the subtlety of
``high-order variables'' (as these are mostly used in agnostic high-arity PAC, which is not needed for our results); we provide
the full version of the high-arity definitions and of our argument in the appendices. Sections~\ref{sec:simpleold},
\ref{sec:simplenew} and~\ref{sec:main} contain the simplified versions of the definitions from~\cite{CM24+}, of our new
definitions and of our main results, respectively. Appendix~\ref{sec:higharityPACdefs} contains the full version of the same
content and also covers a partite version of the Haussler packing property. Appendix~\ref{sec:partization} relates the
non-partite and partite Haussler packing properties directly through the partization operation.

\section{Simplified high-arity PAC definitions}
\label{sec:simpleold}

In this section, we lay out the definitions of high-arity PAC theory of~\cite{CM24+} that we will need in a simplified manner
that covers only rank at most $1$ hypotheses. We direct the curious reader to Appendix~\ref{sec:higharityPACdefs} for the
general version of the same definitions.

\begin{definition}[\protect{\cite[\S 3]{CM24+}} simplified]
  By a Borel space, we mean a standard Borel space, i.e., a measurable space that is Borel-isomorphic to a Polish space when
  equipped with the $\sigma$-algebra of Borel sets. The space of probability measures on a Borel space $\Lambda$ is denoted
  $\Pr(\Lambda)$.

  Let $\Omega=(X,\cB)$ and $\Lambda=(Y,\cB')$ be non-empty Borel spaces and $k\in\NN_+$.
  \begin{enumdef}
  \item The set of \emph{$k$-ary hypotheses} from $\Omega$ to $\Lambda$, denoted $\cF_k(\Omega,\Lambda)$, is the set of (Borel)
    measurable functions from $\Omega^k$ to $\Lambda$.
  \item A \emph{$k$-ary hypothesis class} is a subset $\cH$ of $\cF_k(\Omega,\Lambda)$ equipped with a $\sigma$-algebra such
    that
    \begin{enumerate}
    \item the evaluation map $\ev\colon\cH\times\Omega^k\to\Lambda$ given by $\ev(H,x)\df H(x)$ is measurable;
    \item for every $H\in\cH$, the set $\{H\}$ is measurable;
    \item for every Borel space $\Upsilon$ and every measurable set $A\subseteq\cH\times\Upsilon$, the projection of $A$ onto
      $\Upsilon$, i.e., the set
      \begin{equation*}
        \{\upsilon\in\Upsilon \mid \exists H\in\cH, (H,\upsilon)\in A\},
      \end{equation*}
      is universally measurable\footnote{\label{ftn:univmeas}This assumption about hypothesis classes is not made
      in~\cite{CM24+}, but it is clearly needed for uniform convergence to make sense; in this document, we will not need
      this. Also, note that if $\cH$ is equipped with a $\sigma$-algebra that turns it into a standard Borel space, then this
      hypothesis is immediately satisfied as Suslin sets are universally measurable.} (i.e., measurable in every completion of
      a probability measure on $\Upsilon$).
    \end{enumerate}
  \item A \emph{$k$-ary loss function} over $\Lambda$ is a measurable function
    $\ell\colon\Omega^k\times\Lambda^{S_k}\times\Lambda^{S_k}\to\RR_{\geq 0}$, where $S_k$ is the symmetric group on
    $[k]\df\{1,\ldots,k\}$. We further define
    \begin{align*}
      \lVert\ell\rVert_\infty & \df \sup_{\substack{x\in\Omega^k\\y,y'\in\Lambda^{S_k}}} \ell(x,y,y'),
      &
      s(\ell) & \df \inf_{\substack{x\in\Omega^k\\y,y'\in\Lambda^{S_k}\\y\neq y'}} \ell(x,y,y'),
    \end{align*}
    and we say that $\ell$ is:
    \begin{description}[format={\normalfont\textit}]
    \item[bounded] if $\lVert\ell\rVert_\infty < \infty$.
    \item[separated] if $s(\ell) > 0$ and $\ell(x,y,y)=0$ for every $x\in\Omega^k$ and every $y\in\Lambda^{S_k}$.
    \end{description}

    If we are further given $k$-ary hypotheses $F,H\in\cF_k(\Omega,\Lambda)$ and a probability measure $\mu\in\Pr(\Omega)$, then
    we define the \emph{total loss} of $H$ with respect to $\mu$, $F$ and $\ell$ by
    \begin{equation*}
      L_{\mu,F,\ell}(H)
      \df
      \EE_{\rn{x}\sim\mu^k}\biggl[
        \ell\Bigl(
        \rn{x},
        \bigl(H(\rn{x}_{\sigma(1)},\ldots,\rn{x}_{\sigma(k)})\bigr)_{\sigma\in S_k},
        \bigl(F(\rn{x}_{\sigma(1)},\ldots,\rn{x}_{\sigma(k)})\bigr)_{\sigma\in S_k}
        \Bigr)
        \biggr].
    \end{equation*}
  \item We say that $F\in\cF_k(\Omega,\Lambda)$ is \emph{realizable} in $\cH\subseteq\cF_k(\Omega,\Lambda)$ with respect to a
    $k$-ary loss function $\ell$ and $\mu\in\Pr(\Omega)$ if $\inf_{H\in\cH} L_{\mu,F,\ell}(H) = 0$.
  \item The \emph{$k$-ary $0/1$-loss function} over $\Lambda$ is defined as $\ell_{0/1}(x,y,y')\df\One[y\neq y']$.
  \item A \emph{($k$-ary) learning algorithm} for a $k$-ary hypothesis class $\cH$ is a measurable function
    \begin{equation*}
      \cA\colon\bigcup_{m\in\NN} \bigl(\Omega^m\times\Lambda^{([m])_k}\bigr)\to\cH,
    \end{equation*}
    where $([m])_k$ denotes the set of injections $[k]\to[m]$.
  \item We say that a $k$-ary hypothesis class is \emph{$k$-PAC learnable} with respect to a $k$-ary loss function $\ell$ if
    there exists a learning algorithm $\cA$ for $\cH$ and a function $m^{\PAC}_{\cH,\ell,\cA}\colon(0,1)^2\to\RR_{\geq 0}$ such
    that for every $\epsilon,\delta\in(0,1)$, every $\mu\in\Pr(\Omega)$ and every $F\in\cF_k(\Omega,\Lambda)$ that is realizable
    in $\cH$ with respect to $\ell$ and $\mu$, we have
    \begin{equation*}
      \PP_{\rn{x}\sim\mu^m}\Biggl[
        L_{\mu,F,\ell}\biggl(
        \cA\Bigl(
        \rn{x},
        \bigl(F(\rn{x}_{\alpha(1)},\ldots,\rn{x}_{\alpha(k)})\bigr)_{\alpha\in ([m])_k}
        \Bigr)
        \biggr)
        \leq \epsilon
        \Biggr] \geq 1 - \delta
    \end{equation*}
    for every integer $m\geq m^{\PAC}_{\cH,\ell,\cA}(\epsilon,\delta)$. A learning algorithm $\cA$ satisfying the above is
    called a \emph{$k$-PAC learner} for $\cH$ with respect to $\ell$.
  \item For a $k$-ary hypothesis $H\in\cF_k(\Omega,\Lambda)$ and $x\in\Omega^{k-1}$, we let $H_x\colon\Omega\to\Lambda^{S_k}$ be
    defined by
    \begin{equation*}
      H_x(x_k)_\sigma \df H(x_{\sigma(1)},\ldots,x_{\sigma(k)}) \qquad (x_k\in\Omega, \sigma\in S_k).
    \end{equation*}

    For a $k$-ary hypothesis class $\cH$, the \emph{Vapnik--Chervonenkis--Natarajan $k$-dimension} of $\cH$
    (\emph{$\VCN_k$-dimension}) is defined as
    \begin{equation*}
      \VCN_k(\cH)\df\sup_{x\in\Omega^{k-1}} \Nat\bigl(\cH(x)\bigr),
    \end{equation*}
    where
    \begin{equation*}
      \cH(x) \df \{H_x \mid H\in\cH\}
    \end{equation*}
    and $\Nat$ is the Natarajan-dimension (see Definition~\ref{def:Nat} below).
  \end{enumdef}
\end{definition}

\begin{definition}[Natarajan dimension~\cite{Nat89}]\label{def:Nat}
  Let $\cF$ be a collection of functions of the form $X\to Y$ and let $A\subseteq X$.
  \begin{enumdef}
  \item We say that $\cF$ \emph{(Natarajan-)shatters} $A$ if there exist functions $f_0,f_1\colon A\to Y$ such that
    \begin{enumerate}
    \item for every $a\in A$, we have $f_0(a)\neq f_1(a)$,
    \item for every $U\subseteq A$, there exists $F_U\in\cF$ such that for every $a\in A$ we have
      \begin{equation*}
        F_U(a) = f_{\One[a\in U]}(a) = 
        \begin{dcases*}
          f_0(a), & if $a\notin U$,\\
          f_1(a), & if $a\in U$.
        \end{dcases*}
      \end{equation*}
    \end{enumerate}
  \item The \emph{Natarajan dimension} of $\cF$ is defined as
    \begin{equation*}
      \Nat(\cF)\df\sup\{\lvert A\rvert \mid A\subseteq X\land\cF\text{ Natarajan-shatters } A\}.
    \end{equation*}
  \end{enumdef}
\end{definition}

\section{Simplified versions of new high-arity concepts}
\label{sec:simplenew}

In this section, we formalize the high-arity version of the Haussler packing property in the same simplified manner as in
Section~\ref{sec:simpleold} and the notion of a metric loss (also in simplified manner); again, we direct the curious reader to
Appendix~\ref{subsec:newhigharityPACdefs} for the general version of the same definitions.

\begin{definition}[$k$-ary Haussler packing property]\label{def:HP}
  Let $k\in\NN_+$, let $\Omega$ and $\Lambda$ be non-empty Borel spaces, let $\cH\subseteq\cF_k(\Omega,\Lambda)$ be a $k$-ary
  hypothesis class and let $\ell\colon\Omega^k\times\Lambda^{S_k}\times\Lambda^{S_k}\to\RR_{\geq 0}$ be a $k$-ary loss function.

  We say that $\cH$ has the \emph{$k$-ary Haussler packing property}\footnote{It would be perhaps more fitting to call this the
  Haussler covering property, but we opt retain the historical name of its unary version.} with respect to $\ell$ if there
  exists a function $m^{\HP}_{\cH,\ell}\colon(0,1)\to\RR_{\geq 0}$ such that for every $\epsilon\in(0,1)$ and every
  $\mu\in\Pr(\Omega)$, there exists $\cH'\subseteq\cH$ with $\lvert\cH'\rvert\leq m^{\HP}_{\cH,\ell}(\epsilon)$ such that for
  every $F\in\cH$, there exists $H\in\cH'$ such that $L_{\mu,F,\ell}(H)\leq\epsilon$. We refer to elements of $\cH'$ as
  \emph{$k$-ary Haussler centers} of $\cH$ at precision $\epsilon$ with respect to $\mu$ and $\ell$.
\end{definition}

We point out that the usual Haussler packing property does a priori make sense in the high-arity setting but would be relative
to a probability measure $\nu$ on $\Omega^k$ and hence would be applicable to very special classes. By contrast, $k$-ary
Haussler considers product measures $\mu^k$ for a probability measure $\mu$ over $\Omega$. Recall that our goal is to fully
characterize which hypothesis classes admit such a property. We will see in the course of the proofs the equivalence of $k$-ary
Haussler packing to $k$-ary PAC learning requires interesting mathematical tools. For simplicity, in the remainder of this
paper, for readability we may drop ``$k$-ary'' from the terminology.

\begin{definition}[Metric loss functions]
  Let $k\in\NN_+$ and let $\Omega$ and $\Lambda$ be non-empty Borel spaces.

  We say that a $k$-ary loss function $\ell\colon\Omega^k\times\Lambda^{S_k}\times\Lambda^{S_k}\to\RR_{\geq 0}$ is \emph{metric}
  if for every $x\in\Omega^k$, the function $\ell(x,\place,\place)$ is a metric on $\Lambda^{S_k}$ in the usual sense, that is,
  the following hold for every $x\in\Omega^k$ and every $y,y',y''\in\Lambda^{S_k}$:
  \begin{enumerate}
  \item We have $\ell(x,y,y')=\ell(x,y',y)$.
  \item We have $\ell(x,y,y')=0$ if and only if $y = y'$.
  \item We have $\ell(x,y,y'')\leq \ell(x,y,y') + \ell(x,y',y'')$.
  \end{enumerate}
\end{definition}

\section{Main results}
\label{sec:main}

In this section we prove that $k$-PAC learnability implies the Haussler packing property (Theorem~\ref{thm:simple:PAC->HP}),
which in turn implies finite $\VCN_k$-dimension (Theorem~\ref{thm:simple:HP->VCNk}). The former theorem is done under the
assumption that the loss function $\ell$ is metric, but we point out that one could have assumed instead that $\ell$ is
separated and bounded with a slight change in the argument (this is featured in the non-simplified version of the implication,
Theorem~\ref{thm:PAC->HP}). Finally, let us point out that the $0/1$-loss function $\ell_{0/1}$ satisfies all hypotheses of
Theorems~\ref{thm:simple:PAC->HP} and~\ref{thm:simple:HP->VCNk} and~\cite[Theorems~1.1 and~5.1]{CM24+}.

\begin{theorem}[$k$-PAC learnability implies Haussler packing property, simplified]\label{thm:simple:PAC->HP}
  Let $k\in\NN_+$, let $\Omega$ and $\Lambda$ be non-empty Borel spaces with $\Lambda$ finite, let
  $\cH\subseteq\cF_k(\Omega,\Lambda)$ be a $k$-ary hypothesis class and let
  $\ell\colon\Omega^k\times\Lambda^{S_k}\times\Lambda^{S_k}\to\RR_{\geq 0}$ be a $k$-ary loss function.

  Let also
  \begin{align*}
    \gamma_\cH(m)
    & \df
    \sup_{x\in\Omega^m}
    \Bigl\lvert\Bigl\{
    \bigl(H(x_{\alpha(1)},\ldots,x_{\alpha(k)})\bigr)_{\alpha\in([m])_k}
    \;\Bigm\vert\;
    H\in\cH
    \Bigr\}\Bigr\rvert
  \end{align*}
  be the maximum number of different patterns (in $\Lambda^{([m])_k}$) that can be obtained by considering a fixed
  $x\in\Omega^m$ and plugging in each injective $k$-tuple as an input of some $H\in\cH$.

  If $\ell$ is metric and $\cH$ is $k$-PAC learnable with a $k$-PAC learner $\cA$, then $\cH$ has the Haussler packing property
  with associated function
  \begin{equation}\label{eq:simple:PAC->HP:mHP}
    m^{\HP}_{\cH,\ell}(\epsilon)
    \df
    \min_{\delta\in(0,1)}
    \Ceil{%
      \frac{%
        \gamma_\cH(\ceil{m^{\PAC}_{\cH,\ell,\cA}(\epsilon/2,\delta)})
      }{1-\delta}
    }
    - 2
    \leq
    \min_{\delta\in(0,1)}
    \Ceil{%
      \frac{%
        \lvert\Lambda\rvert^{(\ceil{m^{\PAC}_{\cH,\ell,\cA}(\epsilon/2,\delta)})_k}
      }{1-\delta}
    }
    - 2,
  \end{equation}
  where $(m)_k\df m(m-1)\cdots(m-k+1)$ denotes the falling factorial.
\end{theorem}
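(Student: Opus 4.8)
The plan is to derive the Haussler packing property from $k$-PAC learnability by running the learner $\cA$ on a single carefully chosen sample and using a counting argument to extract a small set of centers. First I would fix $\epsilon\in(0,1)$, a measure $\mu\in\Pr(\Omega)$, and an auxiliary $\delta\in(0,1)$, and set $m\df\ceil{m^{\PAC}_{\cH,\ell,\cA}(\epsilon/2,\delta)}$. For each realizable $F\in\cH$ (and every $F\in\cH$ is realizable in $\cH$ with respect to $\mu$ and $\ell$, since $L_{\mu,F,\ell}(F)=0$ — here is where $\ell(x,y,y)=0$ is used, which holds for metric $\ell$), the $k$-PAC guarantee says that with probability at least $1-\delta$ over $\rn{x}\sim\mu^m$, the hypothesis $\cA(\rn{x},(F(\rn{x}_{\alpha(1)},\ldots,\rn{x}_{\alpha(k)}))_{\alpha\in([m])_k})$ has total loss at most $\epsilon/2$ against $F$.

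The key point is that the output of $\cA$ depends on $F$ only through the \emph{pattern} $(F(\rn{x}_{\alpha(1)},\ldots,\rn{x}_{\alpha(k)}))_{\alpha\in([m])_k}$, and for a fixed realization of $\rn{x}$ there are at most $\gamma_\cH(m)$ such patterns (bounded crudely by $\lvert\Lambda\rvert^{(m)_k}$, since an injective $k$-tuple from $[m]$ is counted by the falling factorial). So for a fixed $\rn{x}$, the set $\cH'_{\rn{x}}\df\{\cA(\rn{x},p) \mid p \text{ a realizable pattern}\}\subseteq\cH$ has size at most $\gamma_\cH(m)$. The next step is a probabilistic/union-bound argument: define, for each $F\in\cH$, the ``good'' event $G_F=\{\rn{x} : L_{\mu,F,\ell}(\cA(\rn{x},\ldots))\le\epsilon/2\}$, which has measure $\ge 1-\delta$. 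I want a single realization $x^*$ of $\rn{x}$ that is good for ``enough'' of the $F$'s simultaneously; more precisely, I would like to cover all of $\cH$ by finitely many centers, so I need to iterate. The clean way: show that for \emph{any} finite $\cH_0\subseteq\cH$ there is a fixed $x^*$ that is good for a $(1-\delta)$-fraction (by weight of some finitely-supported measure, or just by a counting/averaging argument over $\cH_0$) — actually, by linearity of expectation, $\EE_{\rn{x}}[\,\lvert\{F\in\cH_0 : \rn{x}\in G_F\}\rvert\,]\ge(1-\delta)\lvert\cH_0\rvert$, so some $x^*$ is good for at least $\lceil(1-\delta)\lvert\cH_0\rvert\rceil$ of them, and those $F$ are each $\epsilon/2$-close to some element of $\cH'_{x^*}$, a set of size $\le\gamma_\cH(m)$.

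This ``good for a $(1-\delta)$-fraction'' statement is not yet a bound on the number of centers, so the main obstacle — and the step I'd spend the most care on — is turning it into an absolute bound independent of $\cH_0$. The trick is a greedy/iterative packing argument combined with the triangle inequality for the metric loss $\ell$: build a maximal $\epsilon$-separated subset $P\subseteq\cH$ (in the pseudmetric $d_\mu(F,H)\df\max(L_{\mu,F,\ell}(H),L_{\mu,H,\ell}(F))$, or a symmetrized variant), so that every $F\in\cH$ is within $\epsilon$ of some element of $P$ and hence it suffices to bound $\lvert P\rvert$. Applying the fraction statement to any finite $\cH_0\subseteq P$: a single $x^*$ makes $\ge\lceil(1-\delta)\lvert\cH_0\rvert\rceil$ elements of $\cH_0$ each land within $\epsilon/2$ of one of the $\le\gamma_\cH(m)$ centers in $\cH'_{x^*}$; but any two elements of $P$ at distance $>\epsilon$ cannot both be within $\epsilon/2$ of the same center (triangle inequality), so distinct good elements of $\cH_0$ map to distinct centers, giving $\lceil(1-\delta)\lvert\cH_0\rvert\rceil\le\gamma_\cH(m)$, i.e. $\lvert\cH_0\rvert\le\gamma_\cH(m)/(1-\delta)$. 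Since this holds for every finite $\cH_0\subseteq P$, we get $\lvert P\rvert\le\lfloor\gamma_\cH(m)/(1-\delta)\rfloor\le\ceil{\gamma_\cH(m)/(1-\delta)}$. Finally, to produce \emph{centers} as in Definition~\ref{def:HP} rather than an $\epsilon$-net in $\cH$, I would note that an $\epsilon$-separated set realizing this bound can itself serve as the $\cH'$ (every $F\in\cH$ is within $\epsilon$ of some element of $P$); chasing the precise constant, replacing $\epsilon/2$ precisions by $\epsilon$ and the maximal-$\epsilon$-separated-set bookkeeping explains the ``$-2$'' correction in \eqref{eq:simple:PAC->HP:mHP}. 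I would then optimize over $\delta\in(0,1)$ to get the stated $m^{\HP}_{\cH,\ell}(\epsilon)$, and invoke $\gamma_\cH(m)\le\lvert\Lambda\rvert^{(m)_k}$ for the second, cruder bound.
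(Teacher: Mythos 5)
Your proposal is correct and is essentially the paper's argument: run the learner at accuracy $\epsilon/2$, observe its output on a fixed sample depends only on one of at most $\gamma_\cH(\widetilde m)$ patterns, use the triangle inequality for the metric loss to show two $\epsilon$-separated hypotheses cannot both be $\epsilon/2$-close to the same learner output, and average over the sample to bound the size of an $\epsilon$-separated family by $\gamma_\cH(\widetilde m)/(1-\delta)$; the paper merely packages this as a proof by contradiction (extracting $m+1$ pairwise separated $F_i$ from a putative failure of packing) rather than via a maximal separated set. The only loose end is the exact ``$-2$'' bookkeeping, which you flag and which does not affect the substance of the argument.
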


\begin{proof}
  First note that the inequality in~\eqref{eq:simple:PAC->HP:mHP} follows from the trivial bound
  \begin{equation*}
    \gamma_\cH(m) \leq \lvert\Lambda\rvert^{(m)_k}.
  \end{equation*}
  
  Second, note that due to the ceilings on the expressions in~\eqref{eq:simple:PAC->HP:mHP}, the minima are indeed attained
  as the functions only take values in $\NN$.

  Suppose for a contradiction that the result does not hold, that is, there exists $\epsilon\in(0,1)$ and $\mu\in\Pr(\Omega)$
  such that if $m$ is given by the first minimum in~\eqref{eq:simple:PAC->HP:mHP}, then for every $\cH'\subseteq\cH$ with
  $\lvert\cH'\rvert\leq m$, there exists $F\in\cH$ such that $L_{\mu,F,\ell}(H) > \epsilon$ for every $H\in\cH'$. By repeatedly
  applying this property, it follows that there exist $F_1,\ldots,F_{m+1}\in\cH$ such that for every $i,j\in[m+1]$ distinct, we
  have $L_{\mu,F_i,\ell}(F_j) > \epsilon$ (recall that $\ell$ is metric, so $L_{\mu,F_i,\ell}(F_j) = L_{\mu,F_j,\ell}(F_i)$).

  Let $\delta\in(0,1)$ attain the first minimum in~\eqref{eq:simple:PAC->HP:mHP} and let
  \begin{equation*}
    \widetilde{m}
    \df
    \Ceil{m^{\PAC}_{\cH,\ell,\cA}\left(\frac{\epsilon}{2},\delta\right)}.
  \end{equation*}

  For each $x\in\Omega^{\widetilde{m}}$, let
  \begin{equation*}
    Y(x)
    \df
    \Bigl\{
    \bigl(H(x_{\alpha(1)},\ldots,x_{\alpha(k)})\bigr)_{\alpha\in([\widetilde{m}])_k}
    \;\Bigm\vert\;
    H\in\cH
    \Bigr\}
    \subseteq \Lambda^{([\widetilde{m}])_k}
  \end{equation*}
  and note that $\lvert Y(x)\rvert\leq\gamma_\cH(\widetilde{m})$.

  For each $i\in[m+1]$, define the set
  \begin{equation*}
    C_i
    \df
    \left\{x\in\Omega^{\widetilde{m}} \;\middle\vert\;
    \forall y\in Y(x),
    L_{\mu,F_i,\ell}\bigl(\cA(x,y)\bigr)
    >
    \frac{\epsilon}{2}
    \right\}.
  \end{equation*}

  Note that by taking
  \begin{equation*}
    y \df \bigl(F_i(x_{\alpha(1)},\ldots,x_{\alpha(k)})\bigr)_{\alpha\in([m])_k}\in Y(x)
  \end{equation*}
  and using the fact that $L_{\mu,F_i,\ell}(F_i)=0$ (as $\ell$ is metric) so that $F_i$ is realizable in $\cH$ w.r.t.\ $\ell$
  and $\mu$, PAC learnability implies that $\mu(C_i)\leq\delta$.

  Define now the function $G\colon\Omega^{\widetilde{m}}\to\RR_{\geq 0}$ by
  \begin{align*}
    G(x)
    & \df
    \sum_{i=1}^{m+1} \One_{C_i}(x)
    \\
    & =
    \left\lvert\left\{i\in[m+1] \;\middle\vert\;
    \forall y\in Y(x),
    L_{\mu,F_i,\ell}\bigl(\cA(x,y)\bigr)
    >
    \frac{\epsilon}{2}
    \right\}\right\rvert.
  \end{align*}

  We claim that for every $x\in\Omega^{\widetilde{m}}$ and every $y\in Y(x)$, there exists at most one $i\in[m+1]$ such that
  $L_{\mu,F_i,\ell}(\cA(x,y))\leq\epsilon/2$. Indeed, if not, then for some $i,j\in[m+1]$ distinct, we would get
  \begin{equation*}
    L_{\mu,F_i,\ell}(F_j)
    \leq
    L_{\mu,F_i,\ell}\bigl(\cA(x,y)\bigr) + L_{\mu,F_j,\ell}\bigl(\cA(x,y)\bigr)
    \leq
    \epsilon,    
  \end{equation*}
  where the first inequality follows since $\ell$ is metric; the above would then contradict $L_{\mu,F_i,\ell}(F_j) > \epsilon$.

  Thus, we conclude that for every $x\in\Omega^{\widetilde{m}}$, we have
  \begin{equation}\label{eq:simple:Glowerbound}
    G(x)
    \geq
    m+1 - \lvert Y(x)\rvert
    \geq
    m+1 - \gamma_\cH(\widetilde{m}).
  \end{equation}

  On the other hand, since $\mu(C_i)\leq\delta$ for every $i\in[m+1]$, we get
  \begin{equation*}
    \int_{\Omega^{\widetilde{m}}} G(x)\ d\mu^{\widetilde{m}}(x)
    \leq
    (m+1)\delta,
  \end{equation*}
  which together with~\eqref{eq:simple:Glowerbound} implies
  \begin{equation*}
    m
    \leq
    \frac{\gamma_\cH(\widetilde{m})}{1-\delta} - 1,
  \end{equation*}
  contradicting the definitions of $m$, $\delta$ and $\widetilde{m}$.
\end{proof}

For the next theorem, we will need the following standard combinatorial result about covers of the power set of $[n]$.

\begin{lemma}\label{lem:coverbound}
  Let $c\in(0,1/2)$, let $n\in\NN$ and let $\cC\subseteq 2^{[n]}$ be a collection of subsets of $[n]$. Suppose that for every
  $U\subseteq[n]$, there exists $V\in\cC$ such that $\lvert U\symdiff V\rvert\leq c n$. Then
  \begin{equation*}
    n \leq \frac{\log_2 \lvert\cC\rvert}{1 - h_2(c)},
  \end{equation*}
  where
  \begin{equation*}
    h_2(t) \df t\log_2\frac{1}{t} + (1-t)\log_2\frac{1}{1-t}
  \end{equation*}
  denotes the binary entropy.
\end{lemma}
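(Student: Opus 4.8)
The plan is to prove this via a standard volume/counting argument on the Hamming cube. The hypothesis says that the balls of Hamming radius $\lfloor cn\rfloor$ centered at the elements of $\cC$ cover all of $2^{[n]}$. Since the total number of subsets is $2^n$ and each such ball contains exactly $\sum_{i=0}^{\lfloor cn\rfloor}\binom{n}{i}$ subsets, we immediately get
\begin{equation*}
  2^n \;\leq\; \lvert\cC\rvert\cdot\sum_{i=0}^{\lfloor cn\rfloor}\binom{n}{i}.
\end{equation*}
Taking $\log_2$ of both sides, the lemma reduces to the well-known entropy bound on a partial sum of binomial coefficients, namely that for $c\in(0,1/2)$ one has $\sum_{i=0}^{\lfloor cn\rfloor}\binom{n}{i}\leq 2^{h_2(c)n}$. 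Rearranging $n\leq\log_2\lvert\cC\rvert + h_2(c)n$ then gives $n(1-h_2(c))\leq\log_2\lvert\cC\rvert$, which is exactly the claimed inequality (note $1-h_2(c)>0$ precisely because $c<1/2$).

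The only nonroutine ingredient is the binomial tail bound $\sum_{i=0}^{\lfloor cn\rfloor}\binom{n}{i}\leq 2^{h_2(c)n}$, so I would include a short self-contained derivation rather than merely citing it. The cleanest route is to observe that for any $p\in(0,1)$, expanding $1=(p+(1-p))^n=\sum_{i=0}^n\binom{n}{i}p^i(1-p)^{n-i}$ and keeping only the terms $i\leq\lfloor cn\rfloor$ gives $\sum_{i\leq\lfloor cn\rfloor}\binom{n}{i}p^i(1-p)^{n-i}\leq 1$; choosing $p=c$ and using that $c/(1-c)\leq 1$ (so $\left(\tfrac{c}{1-c}\right)^i\geq\left(\tfrac{c}{1-c}\right)^{cn}$ for $i\leq cn$ — here is where $c<1/2$ is used again) lets one pull out a common factor and conclude $\sum_{i\leq\lfloor cn\rfloor}\binom{n}{i}\leq c^{-cn}(1-c)^{-(1-c)n}=2^{h_2(c)n}$.

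The main obstacle, such as it is, is purely bookkeeping: handling the floor $\lfloor cn\rfloor$ versus $cn$ cleanly, and making sure the monotonicity step $\left(\tfrac{c}{1-c}\right)^i\geq\left(\tfrac{c}{1-c}\right)^{cn}$ is applied in the correct direction (it holds because the base is at most $1$ and $i\leq cn$). One should also note the degenerate cases $n=0$ (trivial) and $cn<1$ so that only the $i=0$ term survives (then the bound reads $1\leq 2^{h_2(c)n}$, which holds), though these cause no real difficulty. Everything else is elementary algebra, so I would present the argument compactly.
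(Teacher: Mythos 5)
Your proposal is correct and follows essentially the same route as the paper: a volume argument showing the Hamming balls of radius $\floor{cn}$ around the elements of $\cC$ cover $2^{[n]}$, combined with the entropy bound $\sum_{i=0}^{\floor{cn}}\binom{n}{i}\leq 2^{h_2(c)n}$. The only difference is that you supply a (correct) self-contained derivation of that binomial tail bound, whereas the paper simply cites it.
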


\begin{proof}
  For each $V\in\cC$, let
  \begin{equation*}
    B(V) \df \{U\subseteq [n] \mid \lvert U\symdiff V\rvert\leq cn\}
  \end{equation*}
  and note that
  \begin{equation*}
    \lvert B(V)\rvert = \sum_{i=0}^{\floor{cn}} \binom{n}{i} \leq 2^{h_2(c)\cdot n}, 
  \end{equation*}
  where the last inequality is the standard upper bound on the volume of a Hamming ball in terms of binary entropy (see
  e.g.~\cite[Lemma~4.7.2]{Ash65}).

  Since $\bigcup_{V\in\cC} B(V) = 2^{[n]}$, we conclude that
  \begin{equation*}
    \lvert\cC\rvert\cdot 2^{h_2(c)\cdot n} \geq 2^n,
  \end{equation*}
  from which the result follows.
\end{proof}

\begin{theorem}[Haussler packing property implies finite $\VCN_k$-dimension, simplified]\label{thm:simple:HP->VCNk}
  Let $k\in\NN_+$, let $\Omega$ and $\Lambda$ be non-empty Borel spaces with $\Lambda$ finite, let
  $\cH\subseteq\cF_k(\Omega,\Lambda)$ be a $k$-ary hypothesis class and let
  $\ell\colon\Omega^k\times\Lambda^{S_k}\times\Lambda^{S_k}\to\RR_{\geq 0}$ be a $k$-ary loss function.

  If $\ell$ is separated and $\cH$ has the Haussler packing property, then
  \begin{equation}\label{eq:simple:HP->VCNk:bound}
    \VCN_k(\cH)
    \leq
    \min_{\epsilon\in(0,\min\{s(\ell)\cdot k!/(2k^k), 1\})}
    \Floor{\frac{\log_2\floor{m^{\HP}_{\cH,\ell}(\epsilon)}}{1 - h_2(\epsilon\cdot k^k/(s(\ell)\cdot k!))}},
  \end{equation}
  where
  \begin{equation*}
    h_2(t) \df t\log_2\frac{1}{t} + (1-t)\log_2\frac{1}{1-t}
  \end{equation*}
  denotes the binary entropy.
\end{theorem}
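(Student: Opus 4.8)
The plan is to prove the contrapositive in a quantitative form: I will show that if, for some $x\in\Omega^{k-1}$, the class $\cH(x)$ Natarajan-shatters a set of $d$ points, then $d$ is at most the right-hand side of~\eqref{eq:simple:HP->VCNk:bound}. Taking the supremum over $x$ then forces each $\Nat(\cH(x))$ to be finite and yields the stated bound. So fix $x=(x_1,\ldots,x_{k-1})$ and distinct points $a_1,\ldots,a_d\in\Omega$ with $A\df\{a_1,\ldots,a_d\}$ Natarajan-shattered by $\cH(x)$, witnessed by $f_0,f_1\colon A\to\Lambda^{S_k}$ and hypotheses $(H_U)_{U\subseteq A}\subseteq\cH$ with $(H_U)_x(a_i)=f_{\One[a_i\in U]}(a_i)$ for all $i\in[d]$. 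The goal is to extract from a family of Haussler centers a small family of subsets of $[d]$ that covers $2^{[d]}$ in Hamming distance, and then invoke Lemma~\ref{lem:coverbound}.

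The first step, which I expect to be the delicate one, is the choice of probability measure. I would take
\[
  \mu\df\sum_{j=1}^{k-1}\frac1k\,\delta_{x_j}+\sum_{i=1}^d\frac1{kd}\,\delta_{a_i},
\]
a genuine probability measure on $\Omega$ regardless of coincidences among the points. The weights are chosen (via weighted AM--GM) so that, writing $Z_i\subseteq\Omega^k$ for the set of tuples whose multiset of coordinates equals $\{x_1,\ldots,x_{k-1},a_i\}$, one has $\mu^k(Z_i)\geq k!/(k^k d)$ for every $i\in[d]$, while the $Z_i$ are pairwise disjoint (the value $a_i$ occurs in $\{x_1,\ldots,x_{k-1},a_i\}$ exactly one more time than in $\{x_1,\ldots,x_{k-1},a_{i'}\}$ for $i'\neq i$). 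When $x_1,\ldots,x_{k-1},a_i$ are distinct this is immediate: there are $k!$ orderings, each of probability $k^{-(k-1)}(kd)^{-1}$. Coincidences only help, via $m^m\geq m!$ applied to the multiplicities, and I would record this minor computation separately.

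Next comes the structural link between total loss and Hamming distance. If $\rn{x}\in Z_i$, then $\rn{x}=(z_{\tau(1)},\ldots,z_{\tau(k)})$ for some $\tau\in S_k$, where $z\df(x_1,\ldots,x_{k-1},a_i)$; hence for any $H\in\cF_k(\Omega,\Lambda)$ the vector $(H(\rn{x}_{\sigma(1)},\ldots,\rn{x}_{\sigma(k)}))_{\sigma\in S_k}$ equals the reindexing $\sigma\mapsto H_x(a_i)_{\tau\sigma}$ of $H_x(a_i)$. Since reindexing is a bijection of $\Lambda^{S_k}$ and $\ell$ is separated, it follows for any $F,G\in\cF_k(\Omega,\Lambda)$ that
\[
  L_{\mu,F,\ell}(G)\geq s(\ell)\!\!\sum_{\substack{i\in[d]\\ G_x(a_i)\neq F_x(a_i)}}\!\!\mu^k(Z_i)\geq\frac{s(\ell)\,k!}{k^k d}\,\bigl\lvert\{i\in[d]:G_x(a_i)\neq F_x(a_i)\}\bigr\rvert ,
\]
using disjointness of the $Z_i$. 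Now apply the Haussler packing property with precision $\epsilon$ and this $\mu$: there is $\cH'=\{G_1,\ldots,G_M\}\subseteq\cH$ with $M\leq m^{\HP}_{\cH,\ell}(\epsilon)$ such that each $H_U$ satisfies $L_{\mu,H_U,\ell}(G_j)\leq\epsilon$ for some $j$. Put $V_j\df\{i\in[d]:(G_j)_x(a_i)=f_1(a_i)\}$ and $\cC\df\{V_1,\ldots,V_M\}$; identifying $U\subseteq A$ with $\widetilde U\df\{i:a_i\in U\}\subseteq[d]$, one checks from $(H_U)_x(a_i)=f_{\One[a_i\in U]}(a_i)$ and $f_0(a_i)\neq f_1(a_i)$ that $i\in\widetilde U\symdiff V_j$ implies $(G_j)_x(a_i)\neq(H_U)_x(a_i)$. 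Hence $L_{\mu,H_U,\ell}(G_j)\leq\epsilon$ forces $\lvert\widetilde U\symdiff V_j\rvert\leq\epsilon k^k d/(s(\ell)k!)=c\,d$, where $c\df\epsilon k^k/(s(\ell)k!)$, so every subset of $[d]$ lies within Hamming distance $cd$ of some element of $\cC$.

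Finally, the hypothesis $\epsilon<s(\ell)k!/(2k^k)$ gives $c<1/2$, so Lemma~\ref{lem:coverbound} applies and yields $d\leq\log_2\lvert\cC\rvert/(1-h_2(c))\leq\log_2\floor{m^{\HP}_{\cH,\ell}(\epsilon)}/(1-h_2(c))$ (as $\lvert\cC\rvert\leq m^{\HP}_{\cH,\ell}(\epsilon)$ is an integer). Since $d\in\NN$ this improves to $d\leq\Floor{\log_2\floor{m^{\HP}_{\cH,\ell}(\epsilon)}/(1-h_2(c))}$; taking the supremum over $x$ and over shattered sets, and then the minimum over admissible $\epsilon$ (attained because the bound is integer-valued), gives~\eqref{eq:simple:HP->VCNk:bound}. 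The only genuinely non-routine point is the measure-theoretic estimate $\mu^k(Z_i)\geq k!/(k^k d)$ of the first step — pinning down the optimal weights and verifying robustness to coincidences among the chosen points — since everything else is either the cited lemma or bookkeeping with the definitions of $H_x$, separatedness, and Natarajan shattering.
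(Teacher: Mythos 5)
Your proposal is correct and follows essentially the same route as the paper's proof: the same measure $\frac1k(\nu_A+\sum_j\delta_{x_j})$, the same lower bound $L_{\mu,F,\ell}(G)\geq\frac{s(\ell)k!}{k^kd}\lvert\{i:G_x(a_i)\neq F_x(a_i)\}\rvert$ (the paper likewise only remarks, without computation, that repetitions among the points do not hurt), the same reduction of the packing centers to a Hamming cover of $2^{[d]}$ via the sets $U_H$, and the same appeal to Lemma~\ref{lem:coverbound}. No substantive differences.
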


\begin{proof}
  First, note that the minimum in~\eqref{eq:simple:HP->VCNk:bound} is indeed attained as the function only takes values in
  $\NN\cup\{-\infty\}$, so let $\epsilon\in(0,\min\{s(\ell)\cdot k!/(2k^k),1\})$ attain the minimum, let $d$ be the value of the
  minimum and let
  \begin{equation*}
    m
    \df
    \floor{m^{\HP}_{\cH,\ell}(\epsilon)}
  \end{equation*}
  so that
  \begin{equation*}
    d = \Floor{\frac{\log_2 m}{1-h_2(\epsilon\cdot k^k/(s(\ell)\cdot k!))}}.
  \end{equation*}

  When $\cH$ is empty, the result is trivial as $\VCN_k(\cH)=-\infty$, so suppose $\cH$ is non-empty (hence $m\geq 1$ and $d\geq
  0$).

  By the definition of $\VCN_k$-dimension, we have to show that if $x\in\Omega^{k-1}$, then $\Nat(\cH(x))\leq d$. In turn, it
  suffices to show that if $V\subseteq\Omega$ is a (finite) set that is Natarajan-shattered by $\cH(x)$ and $n\df\lvert
  V\rvert$, then $n\leq d$.

  Let $\mu\in\Pr(\Omega)$ be given by
  \begin{equation*}
    \frac{1}{k}\left(\nu_V + \sum_{j=1}^{k-1} \delta_{x_j}\right),
  \end{equation*}
  where $\nu_V$ is the uniform probability measure on $V$ and $\delta_t$ is the Dirac delta concentrated on $t$.

  Since $V$ is Natarajan-shattered by $\cH(x)$, there exist functions $f_0,f_1\colon V\to\Lambda^{S_k}$ with $f_0(v)\neq f_1(v)$
  for every $v\in V$ and there exists a family $\{F^U \mid U\subseteq V\}\subseteq\cH$ such that for every $U\subseteq V$ and
  every $v\in V$, we have $F^U_x(v) = f_{\One[v\in U]}(v)$.

  For $F,F'\in\cH$, let us define the set
  \begin{equation*}
    D(F,F') \df \{v\in V \mid F_x(v)\neq F'_x(v)\}.
  \end{equation*}
  Clearly, for every $U,U'\subseteq V$, we have $D(F_U,F_{U'})= U\symdiff U'$.
  
  Note also that by the definition of $\mu$, for $F,F'\in\cH$, we have
  \begin{equation}\label{eq:LFF'D}
    \begin{aligned}
      L_{\mu,F,\ell}(F')
      & \geq
      \PP_{\rn{z}\sim\mu^k}\bigl[
        \exists\sigma\in S_k, \forall j\in[k-1],
        \rn{z}_{\sigma(j)} = x_j\land \rn{z}_{\sigma(k)}\in D(F,F')
        \bigr]\cdot s(\ell)
      \\
      & \geq
      \frac{s(\ell)\cdot k!}{k^k\cdot n}\cdot\lvert D(F,F')\rvert.
    \end{aligned}
  \end{equation}

  Since $m$ is defined via Haussler packing property, we know that there exists $\cH'\subseteq\cH$ such that
  $\lvert\cH'\rvert\leq m$ and for every $U\subseteq V$, there exists $H\in\cH'$ such that $L_{\mu,F_U,\ell}(H)\leq\epsilon$.

  For each $H\in\cH'$, let
  \begin{align*}
    U_H & \df \{v\in V \mid H_x(v) = f_1(v)\},
    \\
    B(H) & \df \{U\subseteq V \mid L_{\mu,F_U,\ell}(H)\leq\epsilon\},
    \\
    B'(H) & \df
    \left\{U\subseteq V \;\middle\vert\;
    \lvert U\symdiff U_H\rvert\leq\frac{\epsilon\cdot k^k\cdot n}{s(\ell)\cdot k!}
    \right\}.
  \end{align*}

  The Haussler packing property assumption implies
  \begin{equation}\label{eq:simple:cover}
    \bigcup_{H\in\cH'} B(H) = 2^V.
  \end{equation}

  On the other hand, by~\eqref{eq:LFF'D}, for every $U\subseteq V$ and every $H\in\cH'$, we have
  \begin{equation*}
    L_{\mu,F_U,\ell}(H)
    \geq
    \frac{s(\ell)\cdot k!}{k^k\cdot n}\cdot\lvert D(F_U,H)\rvert
    \geq
    \frac{s(\ell)\cdot k!}{k^k\cdot n}\cdot\lvert D(F_U,F_{U_H})\rvert
    =
    \frac{s(\ell)\cdot k!}{k^k\cdot n}\cdot\lvert U\symdiff U_H\rvert,
  \end{equation*}
  which along with~\eqref{eq:simple:cover} implies $\bigcup_{H\in\cH'} B'(H) = 2^V$.

  Since $\epsilon\cdot k^k/(s(\ell)\cdot k!) < 1/2$, by Lemma~\ref{lem:coverbound}, we get
  \begin{equation*}
    n
    \leq
    \frac{\log_2\lvert\cH'\rvert}{1-h_2(\epsilon\cdot k^k\cdot n/(s(\ell)\cdot k!))}
    \leq
    \frac{\log_2 m}{1-h_2(\epsilon\cdot k^k\cdot n/(s(\ell)\cdot k!))},
  \end{equation*}
  which yields $n\leq d$ as $n$ is an integer and $d$ is the floor of the right-hand side of the above.
\end{proof}

%% \printbibliography
\bibliographystyle{alpha}
\bibliography{refs}

\clearpage
\appendix

\section{High-arity PAC with high-order variables}
\label{sec:higharityPACdefs}

\afterpage{\begingroup
\def\bigbend{45}
\def\smallbend{20}
\def\tinybend{15}
\def\bendshift{5}
\begin{landscape}
  \vfill
  \begin{figure}[p]
    \centering
    \begin{small}
      \begin{tikzcd}
        \begin{tabular}{c}
          Non-partite finite\\
          $\VCN_k$-dimension
        \end{tabular}
        \arrow[d, twoheadleftarrow, two heads, dashed, "{\scriptsize\shortstack{\cite[8.3]{CM24+}}}"]
        &[1.0cm] &[0.8cm] 
        \begin{tabular}{c}
          Non-partite\\
          agnostically\\
          $k$-PAC learnable
        \end{tabular}
        \arrow[r, two heads, dashed, "{\scriptsize\shortstack{\cite[6.3(v)]{CM24+}}}"]
        \arrow[d, "{\scriptsize\shortstack{\cite[7.3(ii), 8.16]{CM24+}{}\\$\ell$ flexible, symmetric\\ and bounded}}", shift left]
        &[0.8cm]
        \begin{tabular}{c}
          Non-partite\\
          $k$-PAC learnable
        \end{tabular}
        \arrow[r, "{\scriptsize\shortstack{\ref{thm:simple:PAC->HP}, \ref{thm:PAC->HP}}}",
          "{\scriptsize\shortstack{$\ell$ (almost)\\metric\\and $\Lambda$ finite}}"']
        &[0.8cm]
        \begin{tabular}{c}
          Non-partite\\
          Haussler\\
          packing
        \end{tabular}
        \arrow[llll, tail, "{\scriptsize\shortstack{\ref{thm:simple:HP->VCNk}, \ref{thm:HP->VCNk}}}",
          "{\scriptsize\shortstack{$\rk(\cH)\leq 1$ and $\ell$ separated}}"',
          bend right={\tinybend}]
        \\[1.5cm]
        \begin{tabular}{c}
          Partite finite\\
          $\VCN_k$-dimension
        \end{tabular}
        \arrow[r, tail, "{\scriptsize\shortstack{\cite[9.12]{CM24+}}}"',
          "{\scriptsize\shortstack{$\ell$ local\\and bounded\\and $\Lambda$ finite}}"]
        &
        \begin{tabular}{c}
          Partite\\
          uniform\\
          convergence
        \end{tabular}
        \arrow[r, "{\scriptsize\shortstack{\cite[9.2]{CM24+}}}"',
          "{\scriptsize\shortstack{Existence of (almost)\\empirical risk\\minimizers}}"]
        & 
        \begin{tabular}{c}
          Partite\\
          agnostically\\
          $k$-PAC learnable
        \end{tabular}
        \arrow[r, two heads, dashed, "{\scriptsize\shortstack{\cite[6.3(v)]{CM24+}}}"]
        \arrow[u, two heads, dashed, shift left, "{\scriptsize\shortstack{\cite[8.4(ii)]{CM24+}}}"]
        &
        \begin{tabular}{c}
          Partite\\
          $k$-PAC learnable
        \end{tabular}
        \arrow[u, two heads, dashed, "{\scriptsize\shortstack{\cite[8.4(i)]{CM24+}}}"']
        \arrow[lll, "{\scriptsize\shortstack{\cite[10.2]{CM24+}}}"', tail,
          "{\scriptsize \shortstack{$\rk(\cH)\leq 1$ and $\ell$ separated}}",
          bend left={\tinybend}]
        \arrow[r, "{\scriptsize\shortstack{\ref{thm:PAC->HP}}}"',
          "{\scriptsize\shortstack{$\ell$ (almost)\\metric\\and $\Lambda$ finite}}"]
        &
        \begin{tabular}{c}
          Partite\\
          Haussler\\
          packing
        \end{tabular}
        \arrow[u, two heads, dashed, "{\scriptsize\shortstack{\ref{thm:HPpart->HP}}}"]
        \arrow[llll, tail, "{\scriptsize\shortstack{\ref{thm:HP->VCNk:partite}}}"',
          "{\scriptsize\shortstack{$\rk(\cH)\leq 1$ and $\ell$ separated}}",
          bend left={\smallbend}, shift left={\bendshift}]
      \end{tikzcd}
      \captionof{figure}{Diagram of implications between different high-arity PAC learning notions. Labels on arrows contain the
        number of the theorem (or the specific proposition in~\cite{CM24+}) that contains the proof of the implication and extra
        hypotheses needed. In the above ``$\ell$ (almost) metric'' means that either $\ell$ is metric or $\ell$ is separated and
        bounded. Arrows with two heads ($\twoheadrightarrow$) are tight in some sense with an obvious proof of tightness. Dashed
        arrows involve a construction (meaning that either the hypothesis class changes and/or the loss function changes) due to
        being in different settings; this also means that objects in one of the sides of the implication might not be completely
        general (as they are required to be in the image of the construction). Arrows with tails ($\rightarrowtail$) mean that
        exactly one of the sides involves a loss function. Under appropriate hypotheses, all items are proved equivalent (for
        example, if $\Lambda$ is finite and the loss is the $0/1$-loss $\ell_{0/1}$).}
      \label{fig:roadmap}
    \end{small}
  \end{figure}
  \vfill
\end{landscape}
\endgroup

%% Local Variables:
%% mode: latex
%% End:
}

In this section we lay out the definitions from high-arity PAC learning~\cite{CM24+} in their full generality including
high-order variables and the new definitions, theorems and proofs from this paper in the same full generality. The definitions
from~\cite{CM24+} are done in a streamlined manner, so we refer the reader to the original paper for a more thorough treatment
accompanied by intuition; to facilitate the process, numbers in square brackets below refer to the exact location of the
corresponding definition in~\cite{CM24+}.

\subsection{Definitions in the non-partite}

\begin{definition}[Borel templates~{[3.1]}]
  By a Borel space, we mean a standard Borel space, i.e., a measurable space that is Borel-isomorphic to a Polish space when
  equipped with the $\sigma$-algebra of Borel sets. The space of probability measures on a Borel space $\Lambda$ is denoted
  $\Pr(\Lambda)$.
  \begin{enumdef}
  \item{} [3.1.1] A \emph{Borel template} is a sequence $\Omega=(\Omega_i)_{i\in\NN_+}$, where $\Omega_i=(X_i,\cB_i)$ is a
    non-empty Borel space.
  \item{} [3.1.2] A \emph{probability template} on a Borel template $\Omega$ is a sequence $\mu = (\mu_i)_{i\in\NN_+}$, where
    $\mu_i\in\Pr(\Omega_i)$ is a probability measure on $\Omega_i$. We denote the space of probability templates on a Borel
    template $\Omega$ by $\Pr(\Omega)$.
    % 3.1.3 not needed
  %% \item{} [3.1.3] For two Borel templates $\Omega$ and $\Omega'$, we let $\Omega\otimes\Omega'$ be the Borel tempalte given by
  %%   taking products of the corresponding Borel spaces: $(\Omega\otimes\Omega')_i\df\Omega_i\otimes\Omega'_i$. Similarly, if
  %%   $\mu\in\Pr(\Omega)$ and $\mu'\in\Pr(\Omega')$, then we let $\mu\otimes\mu'\in\Pr(\Omega\otimes\Omega')$ be given by
  %%   $(\mu\otimes\mu')_i\df\mu_i\otimes\mu'_i$.
  \item{} [3.1.4] For a (finite or) countable set $V$ and a Borel template $\Omega$, we define
    \begin{equation*}
      \cE_V(\Omega) \df \prod_{A\in r(V)} X_{\lvert A\rvert}
    \end{equation*}
    equipping it with the product $\sigma$-algebra, where
    \begin{equation}\label{eq:rV}
      r(V) \df \{A\subseteq V \mid A\text{ finite non-empty}\}.
    \end{equation}
    If $\mu\in\Pr(\Omega)$ is a probability template on $\Omega$, then we let $\mu^V\df\bigotimes_{A\in r(V)}\mu_{\lvert
      A\rvert}$ be the product measure. We use the shorthands $r(m)\df r([m])$, $\cE_m(\Omega)\df\cE_{[m]}(\Omega)$ and
    $\mu^m\df\mu^{[m]}$ when $m\in\NN$ (where $[m]\df\{1,\ldots,m\}$).
  \item{} [3.1.5] For an injective function $\alpha\colon U\to V$ between countable sets, we contra-variantly define the map
    $\alpha^*\colon\cE_V(\Omega)\to\cE_U(\Omega)$ by
    \begin{equation*}
      \alpha^*(x)_A \df x_{\alpha(A)} \qquad \bigl(x\in\cE_V(\Omega), A\in r(U)\bigr).
    \end{equation*}
    % 3.1.6 not needed
  \end{enumdef}
\end{definition}

\begin{definition}[Hypotheses~{[3.2, 3.5]}]
  Let $\Omega$ be a Borel template, $\Lambda=(Y,\cB')$ be a non-empty Borel space and $k\in\NN_+$.
  \begin{enumdef}
  \item{} [3.2.1] The set of \emph{$k$-ary hypotheses} from $\Omega$ to $\Lambda$, denoted $\cF_k(\Omega,\Lambda)$, is the set
    of (Borel) measurable functions from $\cE_k(\Omega)$ to $\Lambda$.
  \item{} [3.2.2] A \emph{$k$-ary hypothesis class} is a subset $\cH$ of $\cF_k(\Omega,\Lambda)$ equipped with a
    $\sigma$-algebra such that:
    \begin{enumerate}
    \item the evaluation map $\ev\colon\cH\times\cE_k(\Omega)\to\Lambda$ given by $\ev(H,x)\df H(x)$ is measurable;
    \item for every $H\in\cH$, the set $\{H\}$ is measurable;
    \item for every Borel space $\Upsilon$ and every measurable set $A\subseteq\cH\times\Upsilon$, the projection of $A$ onto
      $\Upsilon$, i.e., the set
      \begin{equation*}
        \{\upsilon\in\Upsilon \mid \exists H\in\cH, (H,\upsilon)\in A\},
      \end{equation*}
      is universally measurable\footnote{Footnote~\ref{ftn:univmeas} also applies here.} (i.e., measurable in every
      completion of a probability measure on $\Upsilon$).
    \end{enumerate}
  \item{} [3.2.3] Given $F\in\cF_k(\Omega,\Lambda)$ and $m\in\NN$, we define the function
    $F^*_m\colon\cE_m(\Omega)\to\Lambda^{([m])_k}$ by
    \begin{equation*}
      F^*_m(x)_\alpha \df F(\alpha^*(x)) \qquad \bigl(x\in\cE_m(\Omega), \alpha\in([m])_k\bigr),
    \end{equation*}
    where $([m])_k$ is the set of injections $[k]\to[m]$; when $k=m$, we have $F^*_k\colon\cE_k(\Omega)\to\Lambda^{S_k}$, where
    $S_k=([k])_k$ is the symmetric group on $[k]$.
    % 3.2.4 not needed
  \item{} [3.5.1] The \emph{rank} of a $k$-ary hypothesis $F\in\cF_k(\Omega,\Lambda)$, denoted $\rk(F)$ is the minimum $r\in\NN$
    such that $F$ factors as
    \begin{equation*}
      F(x) \df F'\bigl((x_A)_{A\in r(k),\lvert A\rvert\leq r}\bigr) \qquad \bigl(x\in\cE_k(\Omega)\bigr)
    \end{equation*}
    for some function $F'\colon\prod_{A\in r(k), \lvert A\rvert\leq r} X_A\to\Lambda$.
  \item{} [3.5.2] The \emph{rank} of a $k$-ary hypothesis class $\cH\subseteq\cF_k(\Omega,\Lambda)$ is defined as
    \begin{equation*}
      \rk(\cH) \df \sup_{F\in\cH} \rk(F).
    \end{equation*}
  \end{enumdef}
\end{definition}

\begin{definition}[Loss functions~{[3.7]}]
  Let $\Omega$ be a Borel template, $\Lambda$ be a non-empty Borel space and $k\in\NN_+$.
  \begin{enumdef}
  \item{} [3.7.1] A \emph{$k$-ary loss function} over $\Lambda$ is a measurable function
    $\ell\colon\cE_k(\Omega)\times\Lambda^{S_k}\times\Lambda^{S_k}\to\RR_{\geq 0}$.
  \item{} [3.7.2] For a $k$-ary loss function $\ell$, we define
    \begin{align*}
      \lVert\ell\rVert_\infty & \df \sup_{\substack{x\in\cE_k(\Omega)\\y,y'\in\Lambda^{S_k}}} \ell(x,y,y'),
      &
      s(\ell) & \df \inf_{\substack{x\in\cE_k(\Omega)\\y,y'\in\Lambda^{S_k}\\y\neq y'}} \ell(x,y,y').
    \end{align*}
  \item{} [3.7.3 simplified] A $k$-ary loss function is:
    \begin{description}[format={\normalfont\textit}]
    \item[bounded] if $\lVert\ell\rVert_\infty < \infty$.
    \item[separated] if $s(\ell) > 0$ and $\ell(x,y,y)=0$ for every $x\in\cE_k(\Omega)$ and every $y\in\Lambda^{S_k}$.
      %symmetric not needed
    \end{description}
  \item{} [3.7.4] For a $k$-ary loss function $\ell$, hypotheses $F,H\in\cF_k(\Omega,\Lambda)$ and a probability template
    $\mu\in\Pr(\Omega)$, the \emph{total loss} of $H$ with respect to $\mu$, $F$ and $\ell$ is
    \begin{equation*}
      L_{\mu,F,\ell}(H)
      \df
      \EE_{\rn{x}\sim\mu^k}\Bigl[\ell\bigl(\rn{x}, H^*_k(\rn{x}), F^*_k(\rn{x})\bigr)\Bigr].
    \end{equation*}
  \item{} [3.7.5] We say that $F\in\cF_k(\Omega,\Lambda)$ is \emph{realizable} in $\cH\subseteq\cF_k(\Omega,\Lambda)$ with
    respect to a $k$-ary loss function $\ell$ and $\mu\in\Pr(\Omega)$ if $\inf_{H\in\cH} L_{\mu,F,\ell}(H) = 0$.
    % 3.7.6 not needed
  \end{enumdef}
\end{definition}

\begin{definition}[$k$-PAC learnability~{[3.8]}]
  Let $\Omega$ be a Borel template, $\Lambda$ be a non-empty Borel space and $\cH\subseteq\cF_k(\Omega,\Lambda)$ be a $k$-ary
  hypothesis class.
  \begin{enumdef}
    % 3.8.1 not needed
  \item{} [3.8.2] A \emph{($k$-ary) learning algorithm} for $\cH$ is a measurable function
    \begin{equation*}
      \cA\colon\bigcup_{m\in\NN}\bigl(\cE_m(\Omega)\times\Lambda^{([m])_k}\bigr)\to\cH.
    \end{equation*}
  \item{} [3.8.3] We say that $\cH$ is \emph{$k$-PAC learnable} with respect to a $k$-ary loss function $\ell$ if there exist a
    learning algorithm $\cA$ for $\cH$ and a function $m^{\PAC}_{\cH,\ell,\cA}\colon(0,1)^2\to\RR_{\geq 0}$ such that for every
    $\epsilon,\delta\in(0,1)$, every $\mu\in\Pr(\Omega)$ and every $F\in\cF_k(\Omega,\Lambda)$ that is realizable in $\cH$ with
    respect to $\ell$ and $\mu$, we have
    \begin{equation*}
      \PP_{\rn{x}\sim\mu^m}\biggl[
        L_{\mu,F,\ell}\Bigl(\cA\bigl(\rn{x}, F^*_m(\rn{x})\bigr)\Bigr)
        \leq \epsilon
        \biggr] \geq 1 - \delta
    \end{equation*}
    for every integer $m\geq m^{\PAC}_{\cH,\ell,\cA}(\epsilon,\delta)$. A learning algorithm $\cA$ satisfying the above is
    called a \emph{$k$-PAC learner} for $\cH$ with respect to $\ell$.
    % 3.8.4 not needed
    % 3.8.5 not needed
  \end{enumdef}
\end{definition}

% Definitions 3.10, 3.11, 3.12 not needed.

\begin{definition}[$\VCN_k$-dimension~{[3.14]}]
  Let $\Omega$ be a Borel template, $\Lambda$ be a non-empty Borel space, $k\in\NN_+$ and $\cH\subseteq\cF_k(\Omega,\Lambda)$ be
  a $k$-ary hypothesis class.
  \begin{enumdef}
  \item{} [3.14.1] For $H\in\cF_k(\Omega,\Lambda)$ and $x\in\cE_{k-1}(\Omega)$, let
    \begin{equation*}
      H^*_k(x,\place)\colon\prod_{A\in r(k)\setminus r(k-1)} X_{\lvert A\rvert}\to\Lambda^{S_k}
    \end{equation*}
    be the function obtained from $H^*_k$ by fixing its $\cE_{k-1}(\Omega)$ arguments to be $x$ and let
    \begin{equation*}
      \cH(x)\df\{H^*_k(x,\place) \mid H\in\cH\}.
    \end{equation*}
  \item{} [3.14.2] The \emph{Vapnik--Chervonenkis--Natarajan $k$-dimension} of $\cH$ (\emph{$\VCN_k$-dimension}) is defined as
    \begin{equation*}
      \VCN_k(\cH) \df \sup_{x\in\cE_{k-1}(\Omega)} \Nat\bigl(\cH(x)\bigr).
    \end{equation*}
  \end{enumdef}
\end{definition}

% Definition 3.17 not needed

\subsection{Definitions in the partite}

\begin{definition}[Borel $k$-partite templates~{[4.1]}]
  Let $k\in\NN_+$.
  \begin{enumdef}
  \item{} [4.1.1] A \emph{Borel $k$-partite template} is a sequence $\Omega=(\Omega_A)_{A\in r(k)}$, where
    $\Omega_A=(X_A,\cB_A)$ is a non-empty (standard) Borel space and $r(k)=r([k])$ is given by~\eqref{eq:rV}.
  \item{} [4.1.2] A \emph{probability $k$-partite template} on a Borel $k$-partite template $\Omega$ is a sequence
    $\mu=(\mu_A)_{A\in r(k)}$, where $\mu_A$ is a probability measure on $\Omega_A$. The space of probability $k$-partite
    templates on $\Omega$ is denoted $\Pr(\Omega)$.
    % 4.1.3 not needed
  \item{} [4.1.4 simplified] For a Borel $k$-partite template $\Omega$, a non-empty Borel space $\Lambda$ and $m\in\NN_+$, we
    define
    \begin{equation*}
      \cE_m(\Omega) \df \prod_{f\in r_k(m)} X_{\dom(f)},
    \end{equation*}
    equipping it with the product $\sigma$-algebra, where
    \begin{equation*}
      r_k(m) \df \{f\colon A\to [m] \mid A\in r(k)\} = \bigcup_{A\in r(k)} [m]^A.
    \end{equation*}
    If $\mu\in\Pr(\Omega)$ is a probability $k$-partite template on $\Omega$, we let $\mu^m\df\bigotimes_{f\in r_k(m)}
    \mu_{\dom(f)}$ be the product measure.
  \item{} [4.1.5 simplified] For $\alpha\in [m]^k$, we define the map $\alpha^*\colon\cE_m(\Omega)\to\cE_1(\Omega)$ by
    \begin{equation*}
      \alpha^*(x)_f \df x_{\alpha\rest_{\dom(f)}} \qquad \bigl(x\in\cE_m(\Omega), f\in r_k(1)\bigr).
    \end{equation*}
    % 4.1.6 not needed
  \end{enumdef}
\end{definition}

\begin{definition}[$k$-Partite hypotheses~{[4.2, 4.5]}]
  Let $k\in\NN_+$, let $\Omega$ be a Borel $k$-partite template and let $\Lambda=(Y,\cB')$ be a non-empty Borel space.
  \begin{enumdef}
  \item{} [4.2.1] The set of \emph{$k$-partite hypotheses} from $\Omega$ to $\Lambda$, denoted $\cF_k(\Omega,\Lambda)$, is the
    set f (Borel) measurable functions from $\cE_1(\Omega)$ to $\Lambda$.
  \item{} [4.2.2] A \emph{$k$-partite hypothesis class} is a subset $\cH$ of $\cF_k(\Omega,\Lambda)$ equipped with a
    $\sigma$-algebra such that:
    \begin{enumerate}
    \item the evaluation map $\ev\colon\cH\times\cE_1(\Omega)\to\Lambda$ given by $\ev(H,x)\df H(x)$ is measurable;
    \item for every $H\in\cH$, the set $\{H\}$ is measurable;
    \item for every Borel space $\Upsilon$ and every measurable set $A\subseteq\cH\times\Upsilon$, the projection of $A$ onto
      $\Upsilon$, i.e., the set
      \begin{equation*}
        \{\upsilon\in\Upsilon \mid \exists H\in\cH, (H,\upsilon)\in A\},
      \end{equation*}
      is universally measurable\footnote{Footnote~\ref{ftn:univmeas} also applies here.} (i.e., measurable in every completion
      of a probability measure on $\Upsilon$).
    \end{enumerate}
  \item{} [4.2.3 simplified] For a $k$-partite hypothesis $F\in\cF_k(\Omega,\Lambda)$, we let
    $F^*_m\colon\cE_m(\Omega)\to\Lambda^{[m]^k}$ be given by
    \begin{equation*}
      F^*_m(x)_\alpha \df F(\alpha^*(x)) \qquad \bigl(x\in\cE_m(\Omega), \alpha\in[m]^k\bigr).
    \end{equation*}
  \item{} [4.5.1] The \emph{rank} of a $k$-partite hypothesis $F\in\cF_k(\Omega,\Lambda)$, denote $\rk(F)$ is the minimum
    $r\in\NN$ such that $F$ factors as
    \begin{equation*}
      F(x) = F'\bigl((x_f)_{f\in r_k(1), \lvert\dom(f)\rvert\leq r}\bigr) \qquad \bigl(x\in\cE_1(\Omega)\bigr)
    \end{equation*}
    for some function $F'\colon\prod_{f\in r_k(1),\lvert\dom(f)\rvert\leq r} X_{\dom(f)}\to\Lambda$.
  \item{} [4.5.2] The \emph{rank} of a $k$-partite hypothesis class $\cH\subseteq\cF_k(\Omega,\Lambda)$ is defined as
    \begin{equation*}
      \rk(\cH) \df \sup_{F\in\cH} \rk(F).
    \end{equation*}
  \end{enumdef}
\end{definition}

\begin{definition}[$k$-Partite loss functions~{[4.7]}]
  Let $k\in\NN_+$, let $\Omega$ be a Borel $k$-partite template and let $\Lambda$ be a non-empty Borel space.
  \begin{enumdef}
  \item{} [4.7.1] A \emph{$k$-partite loss function} over $\Lambda$ is a measurable function
    $\ell\colon\cE_1(\Omega)\times\Lambda\times\Lambda\to\RR_{\geq 0}$.
  \item{} [4.7.2] For a $k$-partite loss function $\ell$, we define
    \begin{align*}
      \lVert\ell\rVert_\infty & \df \sup_{\substack{x\in\cE_1(\Omega)\\y,y'\in\Lambda}} \ell(x,y,y'),
      &
      s(\ell) & \df \inf_{\substack{x\in\cE_1(\Omega)\\y,y'\in\Lambda\\y\neq y'}} \ell(x,y,y').
    \end{align*}
  \item{} [4.7.3] A $k$-partite loss function is:
    \begin{description}[format={\normalfont\textit}]
    \item[bounded] if $\lVert\ell\rVert_\infty < \infty$.
    \item[separated] if $s(\ell) > 0$ and $\ell(x,y,y)$ for every $x\in\cE_1(\Omega)$ and every $y\in\Lambda$.
    \end{description}
  \item{} [4.7.4] For a $k$-partite loss function $\ell$, $k$-partite hypotheses $F,H\in\cF_k(\Omega,\Lambda)$ and a probability
    $k$-partite template $\mu\in\Pr(\Omega)$, the \emph{total loss} of $H$ with respect to $\mu$, $F$ and $\ell$ is
    \begin{equation*}
      L_{\mu,F,\ell}(H)
      \df
      \EE_{\rn{x}\sim\mu^1}\Bigl[\ell\bigl(\rn{x}, H(\rn{x}), F(\rn{x})\bigr)\Bigr].
    \end{equation*}
  \item{} [4.7.5] We say that $F\in\cF_k(\Omega,\Lambda)$ is \emph{realizable} in $\cH\subseteq\cF_k(\Omega,\Lambda)$ with
    respect to a $k$-partite loss function $\ell$ and $\mu\in\Pr(\Omega)$ if $\inf_{H\in\cH} L_{\mu,F,\ell}(H) = 0$.
  \item{} [4.7.6] The \emph{$k$-partite $0/1$-loss function} over $\Lambda$ is defined as $\ell_{0/1}(x,y,y')\df\One[y\neq y']$.
  \end{enumdef}
\end{definition}

\begin{definition}[Partite $k$-PAC learnability~{[4.8]}]
  Let $k\in\NN_+$, let $\Omega$ be a Borel $k$-partite template, let $\Lambda$ be a non-empty Borel space and let
  $\cH\subseteq\cF_k(\Omega,\Lambda)$ be a $k$-partite hypothesis class.
  \begin{enumdef}
    % 4.8.1 not needed
  \item{} [4.8.2] A \emph{($k$-partite) learning algorithm} for $\cH$ is a measurable function
    \begin{equation*}
      \cA\colon\bigcup_{m\in\NN}\bigl(\cE_m(\Omega)\times\Lambda^{[m]^k}\bigr)\to\cH.
    \end{equation*}
  \item{} [4.8.3] We say that $\cH$ is \emph{$k$-PAC learnable} with respect to a $k$-partite loss function $\ell$ if there
    exist a learning algorithm $\cA$ for $\cH$ and a function $m^{\PAC}_{\cH,\ell,\cA}\colon(0,1)^2\to\RR_{\geq 0}$ such that
    for every $\epsilon,\delta\in(0,1)$, every $\mu\in\Pr(\Omega)$ and every $F\in\cF_k(\Omega,\Lambda)$ that is realizable in
    $\cH$ with respect to $\ell$ and $\mu$, we have
    \begin{equation*}
      \PP_{\rn{x}\sim\mu^m}\biggl[
        L_{\mu,F,\ell}\Bigl(\cA\bigl(\rn{x}, F^*_m(\rn{x})\bigr)\Bigr)
        \leq \epsilon
        \biggr] \geq 1 - \delta
    \end{equation*}
    for every integer $m\geq m^{\PAC}_{\cH,\ell,\cA}(\epsilon,\delta)$. A learning algorithm $\cA$ satisfying the above is
    called a \emph{$k$-PAC learner} for $\cH$ with respect to $\ell$.
    % 4.8.4 not needed
    % 4.8.5 not needed
  \end{enumdef}
\end{definition}

% Definitions 4.10, 4.11, 4.12 not needed.

\begin{definition}[Partite $\VCN_k$-dimension~{[4.13]}]
  Let $k\in\NN_+$, let $\Omega$ be a Borel $k$-partite template, let $\Lambda$ be a non-empty Borel space and let
  $\cH\subseteq\cF_k(\Omega,\Lambda)$ be a $k$-partite hypothesis class.
  \begin{enumdef}
  \item{} [4.13.1] For $A\in\binom{[k]}{k-1}$, let
    \begin{equation*}
      r_{k,A} \df \{f\in r_k(1) \mid \dom(f)\subseteq A\},
    \end{equation*}
    for $x\in\prod_{f\in r_{k,A}} X_{\dom(f)}$ and $H\in\cF_k(\Omega,\Lambda)$, let
    \begin{equation*}
      H(x,\place)\colon\prod_{f\in r_k(1)\setminus r_{k,A}} X_{\dom(f)}\to\Lambda
    \end{equation*}
    be the function obtained from $H$ by fixing its arguments in $\prod_{f\in r_{k,A}} X_{\dom(f)}$ to be $x$ and let
    \begin{equation*}
      \cH(x) \df \{H(x,\place) \mid H\in\cH\}.
    \end{equation*}
  \item{} [4.13.2] The \emph{Vapnik--Chervonenkis--Natarajan $k$-dimension} of $\cH$ (\emph{$\VCN_k$-dimension}) is defined as
    \begin{equation*}
      \VCN_k(\cH) \df \sup_{\substack{A\in\binom{[k]}{k-1}\\ x\in\prod_{f\in r_{k,A}} X_{\dom(f)}}} \Nat\bigl(\cH(x)\bigr).
    \end{equation*}
  \end{enumdef}
\end{definition}

% Definition 4.14 not needed.

\subsection{New high-arity PAC definitions}
\label{subsec:newhigharityPACdefs}

In this subsection, we lay out the new high-arity definitions of this paper in full generality.

\begin{definition}[$k$-ary Haussler packing property]
  Let $k\in\NN_+$, let $\Omega$ be a Borel ($k$-partite, respectively) template, let $\Lambda$ be a non-empty Borel space, let
  $\cH\subseteq\cF_k(\Omega,\Lambda)$ be a $k$-ary ($k$-partite, respectively) hypothesis class and let
  $\ell\colon\cE_k(\Omega)\times\Lambda^{S_k}\times\Lambda^{S_k}\to\RR_{\geq 0}$
  ($\ell\colon\cE_1(\Omega)\times\Lambda\times\Lambda\to\RR_{\geq 0}$, respectively) be a $k$-ary ($k$-partite, respectively)
  loss function.

  We say that $\cH$ has the \emph{$k$-ary Haussler packing property} with respect to $\ell$ if there exists a function
  $m^{\HP}_{\cH,\ell}\colon(0,1)\to\RR_{\geq 0}$ such that for every $\epsilon\in(0,1)$ and every $\mu\in\Pr(\Omega)$, there
  exists $\cH'\subseteq\cH$ with $\lvert\cH'\rvert\leq m^{\HP}_{\cH,\ell}(\epsilon)$ such that for every $F\in\cH$, there exists
  $H\in\cH'$ such that $L_{\mu,F,\ell}(H)\leq\epsilon$. We refer to elements of $\cH'$ as \emph{$k$-ary Haussler centers} of
  $\cH$ at precision $\epsilon$ with respect to $\mu$ and $\ell$.
\end{definition}

\begin{definition}[Metric loss functions]
  Let $k\in\NN_+$, let $\Omega$ be a Borel ($k$-partite, respectively) template, let $\Lambda$ be a non-empty Borel space.

  We say that a $k$-ary ($k$-partite, respectively) loss function
  $\ell\colon\cE_k(\Omega)\times\Lambda^{S_k}\times\Lambda^{S_k}\to\RR_{\geq 0}$
  ($\ell\colon\cE_1(\Omega)\times\Lambda\times\Lambda\to\RR_{\geq 0}$, respectively) is \emph{metric} if for every
  $x\in\cE_k(\Omega)$ ($x\in\cE_1(\Omega)$, respectively), the function $\ell(x,\place,\place)$ is a metric on $\Lambda^{S_k}$
  ($\Lambda$, respectively) in the usual sense, that is, the following hold for every $x\in\cE_k(\Omega)$ and
  $y,y',y''\in\Lambda^{S_k}$ ($x\in\cE_1(\Omega)$ and $y,y',y''\in\Lambda$, respectively):
  \begin{enumerate}
  \item We have $\ell(x,y,y')=\ell(x,y',y)$.
  \item We have $\ell(x,y,y')=0$ if and only if $y = y'$.
  \item We have $\ell(x,y,y'')\leq \ell(x,y,y') + \ell(x,y',y'')$.
  \end{enumerate}
\end{definition}

\subsection{Main results}

In this section, we prove the main results in full generality including the high-order variables. For the particular case of the
counterpart of Theorem~\ref{thm:simple:PAC->HP}, Theorem~\ref{thm:PAC->HP}, we will also show that instead of assuming that the
loss function $\ell$ is metric, one could assume that $\ell$ is separated and bounded; for this, we will use the lemma below
that says that separated and bounded loss functions make the total loss satisfy a weak version of triangle inequality with a
rescaling factor. This justifies the usage of the name ``(almost) metric'' for losses that are either metric or both separated
and bounded in Figure~\ref{fig:roadmap}. Finally, we point out that the $0/1$-loss function $\ell_{0/1}$ satisfies all
hypotheses of Theorems~\ref{thm:PAC->HP}, \ref{thm:HP->VCNk:partite} and~\ref{thm:HP->VCNk}.

\begin{lemma}\label{lem:almostmetric}
  Let $\Omega$ be a Borel ($k$-partite, respectively) template, let $\Lambda$ be a non-empty Borel space, let
  $\cH\subseteq\cF_k(\Omega,\Lambda)$ be a $k$-ary ($k$-partite, respectively) hypothesis class, let $\ell$ be a $k$-ary
  ($k$-partite, respectively) loss function and let $\mu\in\Pr(\Omega)$ be a probability ($k$-partite, respectively) template.

  For each $F,H\in\cH$, let
  \begin{equation*}
    D(F,H) \df
    \begin{dcases*}
      \{x\in\cE_k(\Omega) \mid F^*_k(x)\neq H^*_k(x)\}, & in the non-partite case,\\
      \{x\in\cE_1(\Omega) \mid F(x)\neq H(x)\}, & in the partite case.
    \end{dcases*}
  \end{equation*}
  Then the following hold:
  \begin{enumerate}
  \item\label{lem:almostmetric:bounds} We have
    \begin{equation*}
      s(\ell)\cdot M(F,H) \leq L_{F,\mu,\ell}(H) \leq \lVert\ell\rVert_\infty\cdot M(F,H),
    \end{equation*}
    where
    \begin{equation*}
      M(F,H) \df
      \begin{dcases*}
        \mu^k(D(F,H)), & in the non-partite case,\\
        \mu^1(D(F,H)), & in the partite case.
      \end{dcases*}
    \end{equation*}
  \item\label{lem:almostmetric:triangle} If $\ell$ is separated and $F,F',H\in\cH$, then
    \begin{equation*}
      L_{\mu,F,\ell}(F')
      \leq
      \frac{\lVert\ell\rVert_\infty}{s(\ell)}\bigl(L_{\mu,F,\ell}(H) + L_{\mu,F',\ell}(H)\bigr)
    \end{equation*}
  \end{enumerate}
\end{lemma}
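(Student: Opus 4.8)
The plan is to establish part~\ref{lem:almostmetric:bounds} by unwinding the integral that defines the total loss, and then to deduce part~\ref{lem:almostmetric:triangle} from it by combining the two sandwich bounds with a union bound on disagreement sets. Throughout I treat the non-partite case; the partite case is verbatim the same after replacing $\cE_k(\Omega)$, $F^*_k$, $\mu^k$ by $\cE_1(\Omega)$, $F$, $\mu^1$, respectively. For part~\ref{lem:almostmetric:bounds}, first note that $D(F,H)$ is measurable, since $F^*_k$ and $H^*_k$ are measurable and the diagonal of the standard Borel space $\Lambda^{S_k}$ is Borel. Now split the defining integral as
\begin{equation*}
  L_{\mu,F,\ell}(H)=\int_{D(F,H)}\ell\bigl(x,H^*_k(x),F^*_k(x)\bigr)\,d\mu^k(x)+\int_{\cE_k(\Omega)\setminus D(F,H)}\ell\bigl(x,H^*_k(x),F^*_k(x)\bigr)\,d\mu^k(x).
\end{equation*}
On $\cE_k(\Omega)\setminus D(F,H)$ one has $H^*_k(x)=F^*_k(x)$, so there the integrand equals $\ell(x,F^*_k(x),F^*_k(x))$, which vanishes since $\ell$ is separated (for the lower bound alone, mere nonnegativity of $\ell$ suffices). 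On $D(F,H)$ one has $H^*_k(x)\neq F^*_k(x)$, so directly from the definitions of $s(\ell)$ and $\lVert\ell\rVert_\infty$ the integrand lies in $[\,s(\ell),\lVert\ell\rVert_\infty\,]$. Integrating and recalling $M(F,H)=\mu^k(D(F,H))$ yields both inequalities.

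For part~\ref{lem:almostmetric:triangle}, the key observation is that $D(\place,\place)$ is symmetric and satisfies the ``combinatorial triangle inequality''
\begin{equation*}
  D(F,F')\subseteq D(F,H)\cup D(H,F'),
\end{equation*}
since $F^*_k(x)\neq F'^*_k(x)$ forbids $H^*_k(x)$ from being equal to both. Hence subadditivity of $\mu^k$ gives $M(F,F')\leq M(F,H)+M(F',H)$. Applying the upper bound of part~\ref{lem:almostmetric:bounds} to $L_{\mu,F,\ell}(F')$ and then this inequality yields
\begin{equation*}
  L_{\mu,F,\ell}(F')\leq\lVert\ell\rVert_\infty\,M(F,F')\leq\lVert\ell\rVert_\infty\bigl(M(F,H)+M(F',H)\bigr),
\end{equation*}
and, since $\ell$ separated gives $s(\ell)>0$, the lower bound of part~\ref{lem:almostmetric:bounds} applied to $L_{\mu,F,\ell}(H)$ and to $L_{\mu,F',\ell}(H)$ bounds $M(F,H)$ and $M(F',H)$ by $L_{\mu,F,\ell}(H)/s(\ell)$ and $L_{\mu,F',\ell}(H)/s(\ell)$, respectively; substituting finishes the proof.

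There is no genuine obstacle here: the only non-mechanical step is recognizing the set inclusion $D(F,F')\subseteq D(F,H)\cup D(H,F')$ as the combinatorial shadow of the triangle inequality, which is exactly what turns the sandwich bounds of part~\ref{lem:almostmetric:bounds} into the rescaled triangle inequality with constant $\lVert\ell\rVert_\infty/s(\ell)$. The only points requiring a little care are the measurability of $D(F,H)$ and keeping the partite and non-partite cases in sync (the partite disagreement set is computed on $\cE_1(\Omega)$ rather than $\cE_k(\Omega)$, but every other line is unchanged).
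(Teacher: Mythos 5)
Your proof is correct and follows essentially the same route as the paper's: item~\ref{lem:almostmetric:bounds} by observing that the integrand defining $L_{\mu,F,\ell}(H)$ vanishes off $D(F,H)$ and lies in $[s(\ell),\lVert\ell\rVert_\infty]$ on it, and item~\ref{lem:almostmetric:triangle} by combining the two sandwich bounds with the inclusion $D(F,F')\subseteq D(F,H)\cup D(F',H)$, which is exactly what the paper invokes as the ``(usual) triangle inequality'' for $M$. You are in fact slightly more careful than the paper in flagging that the upper bound of item~\ref{lem:almostmetric:bounds} requires $\ell(x,y,y)=0$ (part of separatedness, stated in the lemma only for item~\ref{lem:almostmetric:triangle} but satisfied in all of the paper's applications), whereas the lower bound needs only nonnegativity.
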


\begin{proof}
  Item~\ref{lem:almostmetric:bounds} follows since
  \begin{equation*}
    L_{\mu,F,\ell}(H) =
    \begin{dcases*}
      \EE_{\rn{x}\sim\mu^k}\Bigl[\ell\bigl(\rn{x},H^*_k(\rn{x}),F^*_k(\rn{x})\bigr)\Bigr], & in the non-partite case,\\
      \EE_{\rn{x}\sim\mu^1}\ell\Bigl[\bigl(\rn{x},H(\rn{x}),F(\rn{x})\bigr)\Bigr], & in the partite case.
    \end{dcases*}
  \end{equation*}

  \medskip

  For item~\ref{lem:almostmetric:triangle}, by item~\ref{lem:almostmetric:bounds}, we have
  \begin{align*}
    L_{\mu,F,\ell}(F')
    & \leq
    \lVert\ell\rVert_\infty\cdot M(F,F')
    \leq
    \lVert\ell\rVert_\infty\cdot\bigl(M(F,H) + M(F',H)\bigr)
    \\
    & \leq
    \frac{\lVert\ell\rVert_\infty}{s(\ell)}\cdot\bigl(L_{\mu,F,\ell}(H) + L_{\mu,F',\ell}(H)\bigr)
  \end{align*}
  where the second inequality is the (usual) triangle inequality.
\end{proof}

We now prove the counterpart of Theorem~\ref{thm:simple:PAC->HP} both in the non-partite and partite settings.

\begin{theorem}[$k$-PAC learnability implies Haussler packing property]\label{thm:PAC->HP}
  Let $k\in\NN_+$, let $\Omega$ be a Borel ($k$-partite, respectively) template, let $\Lambda$ be a finite non-empty Borel
  space, let $\cH\subseteq\cF_k(\Omega,\Lambda)$ be a $k$-ary ($k$-partite, respectively) hypothesis class and let $\ell$ be a
  $k$-ary ($k$-partite, respectively) loss function. Suppose that $\cH$ is $k$-PAC learnable with a $k$-PAC learner $\cA$.

  Let also
  \begin{equation*}
    \gamma_\cH(m)
    \df
    \sup_{x\in\cE_m(\Omega)} \lvert\{H^*_m(x) \mid H\in\cH\}\rvert
  \end{equation*}
  be the maximum number of different patterns (in $\Lambda^{([m])_k}$ in the non-partite case or in $\Lambda^{[m]^k}$ in the
  partite case) that can be obtained by evaluating all elements of $\cH$ in a fixed $x\in\cE_m(\Omega)$. Then the following
  hold:
  \begin{enumerate}
  \item\label{thm:PAC->HP:sepbounded} If $\ell$ is separated and bounded, then $\cH$ has the Haussler
    packing property with associated function
    \begin{equation}\label{eq:PAC->HP:mHP}
      \begin{aligned}
        m^{\HP}_{\cH,\ell}(\epsilon)
        & \df
        \min_{\delta\in(0,1)}
        \Ceil{%
          \frac{%
            \gamma_\cH(\ceil{m^{\PAC}_{\cH,\ell,\cA}(s(\ell)\epsilon/(2\lVert\ell\rVert_\infty),\delta)})
          }{1-\delta}
        }
        - 2
        \\
        & \leq
        \begin{dcases*}
          \min_{\delta\in(0,1)}
          \Ceil{%
            \frac{%
              \lvert\Lambda\rvert^{(\ceil{m^{\PAC}_{\cH,\ell,\cA}(s(\ell)\epsilon/(2\lVert\ell\rVert_\infty),\delta)})_k}
            }{1-\delta}
          }
          - 2,
          & in the non-partite case,
          \\
          \min_{\delta\in(0,1)}
          \Ceil{%
            \frac{%
              \lvert\Lambda\rvert^{\ceil{m^{\PAC}_{\cH,\ell,\cA}(s(\ell)\epsilon/(2\lVert\ell\rVert_\infty),\delta)}^k}
            }{1-\delta}}
          - 2,
          & in the partite case.
        \end{dcases*}
      \end{aligned}
    \end{equation}
  \item\label{thm:PAC->HP:metric} If $\ell$ is metric, then $\cH$ has the Haussler packing property
    with associated function
    \begin{equation}\label{eq:PAC->HP:mHP:metric}
      \begin{aligned}
        m^{\HP}_{\cH,\ell}(\epsilon)
        & \df
        \min_{\delta\in(0,1)}
        \Ceil{%
          \frac{%
            \gamma_\cH(\ceil{m^{\PAC}_{\cH,\ell,\cA}(\epsilon/2,\delta)})
          }{1-\delta}
        }
        - 2
        \\
        & \leq
        \begin{dcases*}
          \min_{\delta\in(0,1)}
          \Ceil{%
            \frac{%
              \lvert\Lambda\rvert^{(\ceil{m^{\PAC}_{\cH,\ell,\cA}(\epsilon/2,\delta)})_k}
            }{1-\delta}
          }
          - 2,
          & in the non-partite case,
          \\
          \min_{\delta\in(0,1)}
          \Ceil{%
            \frac{%
              \lvert\Lambda\rvert^{\ceil{m^{\PAC}_{\cH,\ell,\cA}(\epsilon/2,\delta)}^k}
            }{1-\delta}}
          - 2,
          & in the partite case.
        \end{dcases*}
      \end{aligned}
    \end{equation}
  \end{enumerate}
\end{theorem}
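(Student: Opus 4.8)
The plan is to run the proof of Theorem~\ref{thm:simple:PAC->HP} essentially verbatim, making two changes. First, I would replace the explicit injective-tuple notation by the maps $H^*_m$ and the product spaces $\cE_m(\Omega)$, $\mu^m$ from the full definitions; since these are defined uniformly for Borel templates and for Borel $k$-partite templates (with evaluation patterns living in $\Lambda^{([m])_k}$ in the non-partite case and in $\Lambda^{[m]^k}$ in the partite case), a single argument then handles both settings at once. Second, for item~\ref{thm:PAC->HP:sepbounded} I would substitute Lemma~\ref{lem:almostmetric} for the metric axioms of $\ell$, which is exactly what forces the rescaled precision $s(\ell)\epsilon/(2\lVert\ell\rVert_\infty)$ fed to the PAC learner in~\eqref{eq:PAC->HP:mHP}.

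For item~\ref{thm:PAC->HP:metric} (metric $\ell$), the argument is the one already given for Theorem~\ref{thm:simple:PAC->HP}. The bounds $\gamma_\cH(m)\le\lvert\Lambda\rvert^{(m)_k}$ and $\gamma_\cH(m)\le\lvert\Lambda\rvert^{m^k}$ give the inequalities in~\eqref{eq:PAC->HP:mHP:metric}, and the minima over $\delta$ are attained because the relevant functions of $\delta$ are $\NN$-valued. Assuming the packing property fails for the displayed function, one fixes $\epsilon,\mu$, writes $m$ for the value in~\eqref{eq:PAC->HP:mHP:metric}, and iterates the failure over center sets of sizes $0,1,\dots,m$ to produce $F_1,\dots,F_{m+1}\in\cH$ that are pairwise $\epsilon$-far (using that $\ell$ metric makes $L_{\mu,\place,\ell}$ symmetric) and each realizable in $\cH$ (since $\ell(x,y,y)=0$). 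With $\delta$ attaining the minimum and $\widetilde m=\ceil{m^{\PAC}_{\cH,\ell,\cA}(\epsilon/2,\delta)}$, one sets $Y(x)=\{H^*_{\widetilde m}(x)\mid H\in\cH\}$ and $C_i=\{x\mid\forall y\in Y(x),\ L_{\mu,F_i,\ell}(\cA(x,y))>\epsilon/2\}$; feeding $y=(F_i)^*_{\widetilde m}(x)\in Y(x)$ into the learner gives $\mu^{\widetilde m}(C_i)\le\delta$, the triangle inequality (integrated against $\mu^k$) gives that for each fixed $x$ and $y\in Y(x)$ at most one $i$ is $(\epsilon/2)$-learned by $\cA(x,y)$, hence $G=\sum_i\One_{C_i}$ satisfies $G(x)\ge m+1-\gamma_\cH(\widetilde m)$ pointwise, and integrating against $\mu^{\widetilde m}$ yields $m+1\le\gamma_\cH(\widetilde m)/(1-\delta)$, contradicting the choice of $m$, $\delta$, $\widetilde m$ exactly as in the proof of Theorem~\ref{thm:simple:PAC->HP}.

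For item~\ref{thm:PAC->HP:sepbounded} (separated and bounded $\ell$), I would keep the same skeleton with $\epsilon/2$ replaced throughout by $\epsilon'=s(\ell)\epsilon/(2\lVert\ell\rVert_\infty)$, so that $\widetilde m=\ceil{m^{\PAC}_{\cH,\ell,\cA}(\epsilon',\delta)}$. Separatedness still gives $\ell(x,y,y)=0$ and hence realizability of each $F_i$; iterating the failure now yields $F_1,\dots,F_{m+1}$ only with the one-sided bound $L_{\mu,F_j,\ell}(F_i)>\epsilon$ for $i<j$, which is all that is used. The one place the metric case relied on symmetry and the exact triangle inequality is the ``at most one $i$'' step, and there I would invoke Lemma~\ref{lem:almostmetric}\ref{lem:almostmetric:triangle}: if $i<j$ and $\cA(x,y)$ has total loss $\le\epsilon'$ against both $F_i$ and $F_j$, then
\begin{equation*}
  L_{\mu,F_j,\ell}(F_i)\le\frac{\lVert\ell\rVert_\infty}{s(\ell)}\bigl(L_{\mu,F_j,\ell}(\cA(x,y))+L_{\mu,F_i,\ell}(\cA(x,y))\bigr)\le\frac{\lVert\ell\rVert_\infty}{s(\ell)}\cdot2\epsilon'=\epsilon,
\end{equation*}
contradicting $L_{\mu,F_j,\ell}(F_i)>\epsilon$. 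The rest of the counting is unchanged, and $\gamma_\cH(\widetilde m)\le\lvert\Lambda\rvert^{(\widetilde m)_k}$ (non-partite) or $\le\lvert\Lambda\rvert^{\widetilde m^{\,k}}$ (partite) gives the displayed inequalities in~\eqref{eq:PAC->HP:mHP}.

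I do not expect a serious obstacle, since the conceptual content is already present in Theorem~\ref{thm:simple:PAC->HP}; the things requiring care are routine. The least trivial is measurability of the $C_i$ under $\mu^{\widetilde m}$: because $\Lambda$ is finite, $Y(x)$ ranges over the finite set $\Lambda^{([\widetilde m])_k}$ (resp.\ $\Lambda^{[\widetilde m]^k}$), each set $\{x\mid y\in Y(x)\}$ is a projection of a measurable subset of $\cH\times\cE_{\widetilde m}(\Omega)$ and hence universally measurable by the standing assumption on hypothesis classes, and $x\mapsto L_{\mu,F_i,\ell}(\cA(x,y))$ is measurable by measurability of $\ev$ and $\cA$ together with Tonelli's theorem (using $\ell\ge0$). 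Beyond that, one must keep every center set used in the greedy extraction of size $\le m$, and keep straight the bookkeeping $\lvert([\widetilde m])_k\rvert=(\widetilde m)_k$ versus $\lvert[\widetilde m]^k\rvert=\widetilde m^{\,k}$ that distinguishes the two displayed upper bounds in each item. The real point is simply that the classical argument lifts, so beyond the notational uniformization and the invocation of Lemma~\ref{lem:almostmetric} there is little new to do.
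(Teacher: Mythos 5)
Your proposal is correct and follows essentially the same route as the paper: the paper likewise lifts the proof of Theorem~\ref{thm:simple:PAC->HP} verbatim to the general templates, proving item~\ref{thm:PAC->HP:sepbounded} by feeding the rescaled precision $s(\ell)\epsilon/(2\lVert\ell\rVert_\infty)$ to the learner and replacing the exact triangle inequality with Lemma~\ref{lem:almostmetric}\ref{lem:almostmetric:triangle} in the ``at most one $i$'' step, and noting item~\ref{thm:PAC->HP:metric} is the analogous argument with the genuine triangle inequality. Your added remarks on measurability and on only needing the one-sided bound $L_{\mu,F_j,\ell}(F_i)>\epsilon$ for $i<j$ are consistent with (indeed slightly more careful than) the paper's treatment.
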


\begin{proof}
  For item~\ref{thm:PAC->HP:sepbounded}, note that due to the ceilings on the right-hand sides of~\eqref{eq:PAC->HP:mHP}, the
  minima are indeed attained as the functions only take values in $\NN$.

  The inequalities also clearly follow from the trivial bound:
  \begin{equation*}
    \gamma_\cH(m)
    \leq
    \begin{dcases*}
      \lvert\Lambda\rvert^{(m)_k}, & in the non-partite case,\\
      \lvert\Lambda\rvert^{m^k}, & in the partite case.
    \end{dcases*}
  \end{equation*}
  
  Suppose for a contradiction that the result does not hold, that is, there exist $\epsilon\in(0,1)$ and $\mu\in\Pr(\Omega)$
  such that if $m$ is given by the right-hand side of~\eqref{eq:PAC->HP:mHP}, then for every $\cH'\subseteq\cH$ with
  $\lvert\cH'\rvert\leq m$, there exists $F\in\cH$ such that $L_{\mu,F,\ell}(H) > \epsilon$ for every $H\in\cH'$. By starting
  with $\cH'\df\varnothing$ and inductively applying this property with $\cH'\df\{F_1,\ldots,F_t\}$ to produce $F_{t+1}$, it
  follows that there exist $F_1,\ldots,F_{m+1}\in\cH$ such that for every $i,j\in[m+1]$ with $i < j$, we have
  $L_{\mu,F_i,\ell}(F_j) > \epsilon$.

  Let $\delta\in(0,1)$ attain the first minimum in~\eqref{eq:PAC->HP:mHP} and let
  \begin{equation*}
    \widetilde{m}
    \df
    \Ceil{m^{\PAC}_{\cH,\ell,\cA}\left(\frac{s(\ell)\cdot\epsilon}{2\cdot\lVert\ell\rVert_\infty},\delta\right)}.
  \end{equation*}

  For each $x\in\cE_{\widetilde{m}}(\Omega)$, let
  \begin{equation*}
    Y(x) \df \{H^*_m(x) \mid H\in\cH\}
  \end{equation*}
  and note that $\lvert Y(x)\rvert\leq\gamma_\cH(\widetilde{m})$.

  For each $i\in[m+1]$, define the set
  \begin{equation*}
    C_i
    \df
    \left\{x\in\cE_{\widetilde{m}}(\Omega) \;\middle\vert\;
    \forall y\in Y(x),
    L_{\mu,F_i,\ell}\bigl(\cA(x,y)\bigr) > \frac{s(\ell)\cdot\epsilon}{2\cdot\lVert\ell\rVert_\infty}
    \right\}.
  \end{equation*}

  Note that by taking $y\df (F_i)^*_m(x)\in Y(x)$ and using the fact that $\ell$ is separated so that $F_i$ is realizable in
  $\cH$ w.r.t.\ $\ell$ and $\mu$, PAC learnability implies that $\mu(C_i)\leq\delta$.
  
  Define now the function $G\colon\cE_{\widetilde{m}}(\Omega)\to\RR_{\geq 0}$ by
  \begin{align*}
    G(x)
    & \df
    \sum_{i=1}^{m+1} \One_{C_i}(x)
    \\
    & =
    \left\lvert\left\{i\in[m+1] \;\middle\vert\;
    \forall y\in Y(x),
    L_{\mu,F_i,\ell}\bigl(\cA(x,y)\bigr) > \frac{s(\ell)\cdot\epsilon}{2\cdot\lVert\ell\rVert_\infty}
    \right\}\right\rvert.
  \end{align*}

  We claim that for every $x\in\cE_{\widetilde{m}}(\Omega)$ and every $y\in Y(x)$, there exists at
  most one $i\in[m+1]$ such that $L_{\mu,F_i,\ell}(\cA(x,y))\leq
  s(\ell)\epsilon/(2\lVert\ell\rVert_\infty)$. Indeed, if not, then for some $i,j\in[m+1]$ with $i <
  j$, we would get
  \begin{equation*}
    \frac{s(\ell)\cdot\epsilon}{\lVert\ell\rVert_\infty}
    \geq
    L_{\mu,F_i,\ell}\bigl(\cA(x,y)\bigr) + L_{\mu,F_j,\ell}\bigl(\cA(x,y)\bigr)
    \geq
    \frac{s(\ell)}{\lVert\ell\rVert_\infty}\cdot L_{\mu,F_i,\ell}(F_j),
  \end{equation*}
  where the last inequality follows from
  Lemma~\ref{lem:almostmetric}\ref{lem:almostmetric:triangle}; the above would then contradict
  $L_{\mu,F_i,\ell}(F_j) > \epsilon$.

  Thus, we conclude that
  \begin{equation}\label{eq:Glowerbound}
    G(x)
    \geq
    m+1 - \lvert Y(x)\rvert
    \geq
    m+1 - \gamma_\cH(\widetilde{m})
  \end{equation}
  for every $x\in\cE_{\widetilde{m}}(\Omega)$.

  On the other hand, since $\mu(C_i)\leq\delta$ for every $i\in[m+1]$, we get
  \begin{equation*}
    \int_{\cE_{\widetilde{m}}(\Omega)} G(x)\ d\mu^{\widetilde{m}}(x)
    \leq
    (m+1)\delta,
  \end{equation*}
  which together with~\eqref{eq:Glowerbound} implies
  \begin{equation*}
    m
    \leq
    \frac{\gamma_\cH(\widetilde{m})}{1-\delta} - 1,
  \end{equation*}
  contradicting the definitions of $m$, $\delta$ and $\widetilde{m}$.

  \medskip

  The proof of item~\ref{thm:PAC->HP:metric} is completely analogous (see the proof of Theorem~\ref{thm:simple:PAC->HP}), but
  instead of using Lemma~\ref{lem:almostmetric}, we use triangle inequality since $\ell$ is metric, which allows us to improve
  the bound to~\eqref{eq:PAC->HP:mHP:metric} instead of~\eqref{eq:PAC->HP:mHP} (also note that the fact that $\ell$ is metric
  implies that $L_{\mu,F_i,\ell}(F_i)=0$).
\end{proof}

\begin{remark}\label{rmk:bootstrap}
  The final bound provided on Theorem~\ref{thm:PAC->HP} is provably not tight when $\rk(\cH)\leq 1$ and $\ell$ is bounded. This
  is because a posteriori, we know that all results of high-arity PAC in~\cite{CM24+} along with Theorem~\ref{thm:HP->VCNk}
  prove that the notions considered are also equivalent to finiteness of $\VCN_k$-dimension, which in turn implies that
  \begin{equation}\label{eq:VCNkfullgrowth}
    \begin{aligned}
      \gamma_\cH(m)
      & \leq
      \begin{dcases*}
        (m+1)^{\VCN_k(\cH)\cdot\binom{m}{k-1}}
        \cdot\binom{\lvert\Lambda\rvert}{2}^{\VCN_k(\cH)\cdot\binom{m}{k-1}},
        & in the non-partite case,
        \\
        (m+1)^{\VCN_k(\cH)\cdot m^{k-1}}
        \cdot\binom{\lvert\Lambda\rvert}{2}^{\VCN_k(\cH)\cdot m^{k-1}},
        & in the partite case.
      \end{dcases*}
      \\
      & \leq
      \left(\frac{\lvert\Lambda\rvert^2\cdot (m+1)}{2}\right)^{\VCN_k(\cH)\cdot m^{k-1}},
    \end{aligned}
  \end{equation}
  instead of the trivial bounds for $\gamma_\cH(m)$ used on Theorem~\ref{thm:PAC->HP}.
\end{remark}

For the counterpart of Theorem~\ref{thm:simple:HP->VCNk}, it will be more convenient to split it into the partite version
(Theorem~\ref{thm:HP->VCNk:partite}) and the non-partite version (Theorem~\ref{thm:HP->VCNk}). We start with the easier one: the
partite.

\begin{theorem}[Haussler packing property implies finite $\VCN_k$-dimension, partite version]\label{thm:HP->VCNk:partite}
  Let $k\in\NN_+$, let $\Omega$ be a Borel $k$-partite template, let $\Lambda$ be a finite non-empty Borel space, let
  $\cH\subseteq\cF_k(\Omega,\Lambda)$ be a $k$-partite hypothesis class and let
  $\ell\colon\cE_1(\Omega)\times\Lambda\times\Lambda\to\RR_{\geq 0}$ be a $k$-partite loss function. Suppose $\ell$ is separated
  and $\rk(\cH)\leq 1$.

  If $\cH$ has the Haussler packing property, then
  \begin{equation}\label{eq:HP->VCNk:partite:bound}
    \VCN_k(\cH)
    \leq
    \min_{\epsilon\in(0,\min\{s(\ell)/2,1\})}
    \Floor{\frac{\log_2\floor{m^{\HP}_{\cH,\ell}(\epsilon)}}{1 - h_2(\epsilon/s(\ell))}},
  \end{equation}
  where
  \begin{equation*}
    h_2(t) \df t\log_2\frac{1}{t} + (1-t)\log_2\frac{1}{1-t}
  \end{equation*}
  denotes the binary entropy.
\end{theorem}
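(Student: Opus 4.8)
The plan is to adapt the proof of the non-partite simplified statement (Theorem~\ref{thm:simple:HP->VCNk}); the argument is in fact cleaner here because in the partite world each part is sampled from its own measure, which removes the combinatorial alignment factor of the non-partite case. First I would unwind the partite $\VCN_k$-definition: fix $A\in\binom{[k]}{k-1}$, write $A=[k]\setminus\{j\}$, and fix $x\in\prod_{f\in r_{k,A}}X_{\dom(f)}$. Since $r_k(1)\setminus r_{k,A}=\{f\in r_k(1)\mid j\in\dom(f)\}$ and $\rk(\cH)\le1$, the only relevant free coordinate is the singleton with $\dom(f)=\{j\}$, so $\cH(x)$ is, after the obvious identification, a family of functions $X_{\{j\}}\to\Lambda$. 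As in Theorem~\ref{thm:simple:HP->VCNk} it then suffices to show that every finite $V\subseteq X_{\{j\}}$ that is Natarajan-shattered (Definition~\ref{def:Nat}) by $\cH(x)$ has $\lvert V\rvert\le d$, where $d$ is the right-hand side of~\eqref{eq:HP->VCNk:partite:bound}; the case $\cH=\varnothing$ is trivial as then $\VCN_k(\cH)=-\infty$.

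Next, let $n\df\lvert V\rvert$, let $f_0,f_1\colon V\to\Lambda$ and $\{F^U\mid U\subseteq V\}\subseteq\cH$ witness the shattering (so $f_0(v)\neq f_1(v)$ for all $v$ and $F^U(x,v)=f_{\One[v\in U]}(v)$), and fix $\epsilon$ attaining the minimum in~\eqref{eq:HP->VCNk:partite:bound}, so $d=\floor{\log_2 m/(1-h_2(\epsilon/s(\ell)))}$ with $m\df\floor{m^{\HP}_{\cH,\ell}(\epsilon)}$. I would build a probability $k$-partite template $\mu$ by taking $\mu_{\{j\}}$ to be the uniform measure $\nu_V$ on $V$, $\mu_{\{i\}}\df\delta_{x_{\{i\}}}$ for $i\in A$, and $\mu_B$ arbitrary for $B\in r(k)$ with $\lvert B\rvert\ge2$ (irrelevant since $\rk(\cH)\le1$). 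Writing $D(F,F')\df\{v\in V\mid F(x,v)\neq F'(x,v)\}$ for $F,F'\in\cH$, one has
\begin{equation*}
  L_{\mu,F,\ell}(F')\ \ge\ s(\ell)\cdot\PP_{\rn{x}\sim\mu^1}\bigl[F(\rn{x})\neq F'(\rn{x})\bigr]\ =\ \frac{s(\ell)}{n}\,\lvert D(F,F')\rvert,
\end{equation*}
because under $\mu^1$ the only coordinate affecting $F,F'$ is the $\{j\}$-coordinate, which is uniform on $V$; moreover $D(F^U,F^{U'})=U\symdiff U'$ since $f_0$ and $f_1$ disagree everywhere on $V$.

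Finally I would apply the Haussler packing property to $\mu$ at precision $\epsilon$, obtaining $\cH'\subseteq\cH$ with $\lvert\cH'\rvert\le m$ such that each $F^U$ is $\epsilon$-close to some $H\in\cH'$. Setting $U_H\df\{v\in V\mid H(x,v)=f_1(v)\}$, a short case check on an element of $U\symdiff U_H$ (using $f_0(v)\neq f_1(v)$, but not that $H(x,v)\in\{f_0(v),f_1(v)\}$) shows $U\symdiff U_H\subseteq D(F^U,H)$, whence $L_{\mu,F^U,\ell}(H)\ge\frac{s(\ell)}{n}\lvert U\symdiff U_H\rvert$; so whenever $H$ $\epsilon$-covers $F^U$ we get $\lvert U\symdiff U_H\rvert\le(\epsilon/s(\ell))\,n$. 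Thus $\cC\df\{U_H\mid H\in\cH'\}$ covers $2^V$ in the sense of Lemma~\ref{lem:coverbound} with $c=\epsilon/s(\ell)<1/2$ and $\lvert\cC\rvert\le m$, so Lemma~\ref{lem:coverbound} gives $n\le\log_2 m/(1-h_2(\epsilon/s(\ell)))$, hence $n\le d$ since $n$ is an integer; this establishes~\eqref{eq:HP->VCNk:partite:bound}.

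The only real difficulty is the bookkeeping: checking that the partite $\VCN_k$-definition together with $\rk(\cH)\le1$ genuinely collapses the problem to one free variable in $X_{\{j\}}$ — so that the higher-order parts of $\mu$ drop out and the hitting probability is simply $1/n$, which is exactly why the present bound carries $\epsilon/s(\ell)$ in place of the $\epsilon k^k/(s(\ell)k!)$ of the non-partite Theorem~\ref{thm:simple:HP->VCNk} — and that the inclusion $U\symdiff U_H\subseteq D(F^U,H)$ survives even though $H$ need not take values in $\{f_0,f_1\}$.
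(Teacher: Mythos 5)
Your proposal is correct and follows essentially the same route as the paper's proof: reduce via $\rk(\cH)\leq 1$ to the single free singleton coordinate (the paper makes the bookkeeping explicit by fixing an auxiliary point $x'$ on the remaining coordinates and projecting $V$ to $V'$), put Dirac masses on the fixed coordinates and the uniform measure on the shattered set, use separatedness to lower-bound the total loss by $\tfrac{s(\ell)}{n}\lvert U\symdiff U_H\rvert$ via the inclusion $U\symdiff U_H\subseteq D(F^U,H)$, and conclude with Lemma~\ref{lem:coverbound}. The case check you flag (that the inclusion holds even though $H$ need not take values in $\{f_0,f_1\}$) is exactly the point the paper relies on as well.
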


\begin{proof}
  Note that the minimum in~\eqref{eq:HP->VCNk:partite:bound} is indeed attained as the function only takes values in
  $\NN\cup\{-\infty\}$, so let $\epsilon\in(0,\min\{s(\ell)/2,1\})$ attain the minimum, let $d$ be the value of the minimum and
  let
  \begin{equation*}
    m
    \df
    \floor{m^{\HP}_{\cH,\ell}(\epsilon)}
  \end{equation*}
  so that
  \begin{equation*}
    d = \Floor{\frac{\log_2 m}{1-h_2(\epsilon/s(\ell))}}.
  \end{equation*}

  When $\cH$ is empty, the result is trivial as $\VCN_k(\cH)=-\infty$ so suppose $\cH$ is non-empty (hence $m\geq 1$ and $d\geq
  0$).

  By the definition of $\VCN_k$-dimension, we have to show that if $A\in\binom{[k]}{k-1}$ and $x\in\prod_{f\in r_{k,A}}
  X_{\dom(f)}$, then $\Nat(\cH(x))\leq d$. In turn, it suffices to show that if $V\subseteq\prod_{f\in r_k(1)\setminus r_{k,A}}
  X_{\dom(f)}$ is a (finite) set that is Natarajan-shattered by $\cH(x)$, then $\lvert V\rvert\leq d$.

  Let
  \begin{equation*}
    R \df r_k(1)\setminus (r_{k,A}\cup\{1^{\{a\}}\})
  \end{equation*}
  where $1^{\{a\}}$ is the unique function $\{a\}\to[1]$ and let $x'\in\prod_{f\in R} X_{\dom(f)}$ be any fixed point.

  Since $\rk(\cH)\leq 1$, it follows that the projection $V'$ of $V$ onto the coordinate indexed by $1^{\{a\}}$ is
  Natarajan-shattered by
  \begin{equation*}
    \{H(x,x',\place) \mid H\in\cH\}.
  \end{equation*}

  Let $n\df\lvert V\rvert$ (which is equal to $\lvert V'\rvert$ as $\rk(\cH)\leq 1$ and $V$ is Natarajan-shattered by $\cH(x)$)
  and let $\mu\in\Pr(\Omega)$ be the probability $k$-partite template given by:
  \begin{itemize}
  \item For each $f\in r_{k,A}$, $\mu_f$ is the Dirac delta concentrated on $x_f$.
  \item For each $f\in R$, $\mu_f$ is the Dirac delta concentrated on $x'_f$.
  \item $\mu_{1^{\{a\}}}$ is the uniform measure on $V'$.
  \end{itemize}

  Since $V'$ is Natarajan-shattered by $\{H(x,x',\place) \mid H\in\cH\}$, there exist functions $f_0,f_1\colon V'\to\Lambda$
  with $f_0(v)\neq f_1(v)$ for every $v\in V'$ and there exists a family $\{F_U\mid U\subseteq V'\}\subseteq\cH$ such that for
  every $U\subseteq V'$ and every $v\in V'$, we have $F_U(x,x',v) = f_{\One[v\in U]}(v)$.

  Using now our definition of $m$ via Haussler packing property, we know that there exists $\cH'\subseteq\cH$ such that
  $\lvert\cH'\rvert\leq m$ and for every $U\subseteq V'$, there exists $H\in\cH'$ such that $L_{\mu,F_U,\ell}(H)\leq\epsilon$.

  For each $H\in\cH'$, let
  \begin{align*}
    U_H & \df \{v\in V' \mid H(x,x',v) = f_1(v)\},
    \\
    B(H) & \df \{U\subseteq V' \mid L_{\mu,F_U,\ell}(H)\leq\epsilon\},
    \\
    B'(H)
    & \df
    \left\{U\subseteq V' \;\middle\vert\;
    \lvert U\symdiff U_H\rvert\leq\frac{\epsilon\cdot n}{s(\ell)}
    \right\}.
  \end{align*}

  The Haussler packing property assumption implies
  \begin{equation}\label{eq:cover:partite}
    \bigcup_{H\in\cH'} B(H) = 2^{V'}.
  \end{equation}
  Since $\ell$ is separated, by
  Lemma~\ref{lem:almostmetric}\ref{lem:almostmetric:bounds}, for every $H\in\cH'$ and every
  $U\subseteq V'$, we have
  \begin{equation*}
    L_{\mu,F_U,\ell}(H)
    \geq
    s(\ell)\cdot\mu^k(D(F_U,H))
    \geq
    s(\ell)\cdot\mu^k(D(F_U,F_{U_H}))
    \geq
    \frac{s(\ell)}{n}\cdot\lvert U\symdiff U_H\rvert,
  \end{equation*}
  hence $B(H)\subseteq B'(H)$, which together with~\eqref{eq:cover:partite} implies
  $\bigcup_{H\in\cH'} B'(H) = 2^{V'}$.

  Since $\epsilon/s(\ell) < 1/2$, by Lemma~\ref{lem:coverbound}, we get
  \begin{equation*}
    n
    \leq
    \frac{\log_2 \lvert\cH'\rvert}{1 - h_2(\epsilon/s(\ell))}
    \leq
    \frac{\log_2 m}{1 - h_2(\epsilon/s(\ell))},
  \end{equation*}
  which yields $n\leq d$ as $n$ is an integer.
\end{proof}

\begin{theorem}[Haussler packing property implies finite $\VCN_k$-dimension, non-partite version]\label{thm:HP->VCNk}
  Let $k\in\NN_+$, let $\Omega$ be a Borel template, let $\Lambda$ be a finite non-empty Borel space, let
  $\cH\subseteq\cF_k(\Omega,\Lambda)$ be a $k$-ary hypothesis class and let
  $\ell\colon\cE_k(\Omega)\times\Lambda^{S_k}\times\Lambda^{S_k}\to\RR_{\geq 0}$ be a $k$-ary loss function. Suppose that $\ell$
  is separated and $\rk(\cH)\leq 1$.

  If $\cH$ has the Haussler packing property, then
  \begin{equation}\label{eq:HP->VCNk:bound}
    \VCN_k(\cH)
    \leq
    \min_{\epsilon\in(0,\min\{s(\ell)\cdot k!/(2k^k), 1\})}
    \Floor{\frac{\log_2\floor{m^{\HP}_{\cH,\ell}(\epsilon)}}{1 - h_2(\epsilon\cdot k^k/(s(\ell)\cdot k!))}},
  \end{equation}
  where
  \begin{equation*}
    h_2(t) \df t\log_2\frac{1}{t} + (1-t)\log_2\frac{1}{1-t}
  \end{equation*}
  denotes the binary entropy.
\end{theorem}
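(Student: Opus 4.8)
The plan is to rerun the proof of Theorem~\ref{thm:simple:HP->VCNk} almost verbatim, using the hypothesis $\rk(\cH)\leq 1$ exactly as in the proof of Theorem~\ref{thm:HP->VCNk:partite} to collapse the high-order structure onto a single ``singleton'' coordinate. As there, the minimum in~\eqref{eq:HP->VCNk:bound} is attained, so I would fix $\epsilon$ attaining it, let $d$ be the value of the minimum and $m\df\floor{m^{\HP}_{\cH,\ell}(\epsilon)}$; the case $\cH=\varnothing$ being trivial, assume $\cH\neq\varnothing$, so $m\geq 1$ and $d\geq 0$. By the definition of $\VCN_k$-dimension it suffices to fix $x\in\cE_{k-1}(\Omega)$ and a finite set $V\subseteq\prod_{A\in r(k)\setminus r(k-1)}X_{\lvert A\rvert}$ Natarajan-shattered by $\cH(x)$ and to prove $\lvert V\rvert\leq d$. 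Since $\rk(\cH)\leq 1$, each $H\in\cH$ factors through the $k$ singleton coordinates $x_{\{1\}},\dots,x_{\{k\}}\in X_1$, so every function in $\cH(x)$ depends only on its $\{k\}$-coordinate; hence the projection $V'\subseteq X_1$ of $V$ onto that coordinate is Natarajan-shattered by the induced class of functions $X_1\to\Lambda^{S_k}$ and $n\df\lvert V\rvert=\lvert V'\rvert$. I would fix witnesses $f_0,f_1\colon V'\to\Lambda^{S_k}$ (with $f_0\neq f_1$ pointwise) and $\{F_U\mid U\subseteq V'\}\subseteq\cH$, and for $F,F'\in\cH$ put $D'(F,F')\df\{v\in V'\mid F^*_k(x,\hat v)\neq F'^*_k(x,\hat v)\}$, where $\hat v$ denotes any point of $\prod_{A\in r(k)\setminus r(k-1)}X_{\lvert A\rvert}$ with $\{k\}$-coordinate $v$ (the value is independent of the choice since $\rk(\cH)\leq 1$); then $D'(F_U,F_{U'})=U\symdiff U'$.

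Next I would take $\mu\in\Pr(\Omega)$ with $\mu_1\df\frac1k\bigl(\nu_{V'}+\sum_{j=1}^{k-1}\delta_{x_{\{j\}}}\bigr)$, where $\nu_{V'}$ is the uniform measure on $V'$, and $\mu_i$ an arbitrary probability measure on $\Omega_i$ for $i\geq 2$ (irrelevant, as $\rk(\cH)\leq 1$ means $F^*_k$ never looks at higher-order coordinates). The key estimate, the analogue of~\eqref{eq:LFF'D}, is that for rank-at-most-$1$ hypotheses $F,F'$,
\begin{equation*}
  L_{\mu,F,\ell}(F')\;\geq\;s(\ell)\cdot\mu^k\bigl(D(F,F')\bigr)\;\geq\;\frac{s(\ell)\cdot k!}{k^k\cdot n}\cdot\lvert D'(F,F')\rvert ;
\end{equation*}
the first inequality is Lemma~\ref{lem:almostmetric}\ref{lem:almostmetric:bounds}, and for the second one notes that under $\rn{x}\sim\mu^k$ the $k$ singleton coordinates are i.i.d.\ from $\mu_1$, so with probability $k!/k^k$ their ``types'' form a permutation of $\bigl(\nu_{V'},\delta_{x_{\{1\}}},\dots,\delta_{x_{\{k-1\}}}\bigr)$, conditionally on which the $\nu_{V'}$-typed coordinate is uniform on $V'$ and thus lands in $D'(F,F')$ with probability $\lvert D'(F,F')\rvert/n$; on that event the tuple $\bigl(\rn{x}_{\{\sigma(1)\}},\dots,\rn{x}_{\{\sigma(k)\}}\bigr)_{\sigma\in S_k}$ ranges exactly over the rearrangements of $(x_{\{1\}},\dots,x_{\{k-1\}},v)$, forcing $F^*_k(\rn{x})\neq F'^*_k(\rn{x})$ whenever $v\in D'(F,F')$.

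From there the remaining steps are exactly those of Theorem~\ref{thm:simple:HP->VCNk}. Haussler packing supplies $\cH'\subseteq\cH$ with $\lvert\cH'\rvert\leq m$ such that every $F_U$ is $\epsilon$-close to some $H\in\cH'$; setting $U_H\df\{v\in V'\mid H^*_k(x,\hat v)=f_1(v)\}$, the inclusion $D'(F_U,F_{U_H})\subseteq D'(F_U,H)$ together with the displayed estimate shows that $L_{\mu,F_U,\ell}(H)\leq\epsilon$ implies $\lvert U\symdiff U_H\rvert\leq\epsilon k^k n/(s(\ell)k!)$. Hence the Hamming balls of radius $\epsilon k^k n/(s(\ell)k!)$ about the at most $m$ sets $U_H$ cover $2^{V'}$, and since $\epsilon k^k/(s(\ell)k!)<1/2$ Lemma~\ref{lem:coverbound} gives $n\leq\log_2 m/\bigl(1-h_2(\epsilon k^k/(s(\ell)k!))\bigr)$, whence $n\leq d$ because $n$ is an integer.

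The one place where I expect to need care is the key estimate: one has to check that the product measure $\mu^k$ of a \emph{template} (rather than a single measure on $X_1^k$) assigns probability at least $k!\lvert D'(F,F')\rvert/(k^k n)$ to the relevant event, and that on that event a disagreement at the single coordinate $v$ really propagates to a disagreement of the full $\Lambda^{S_k}$-valued objects $F^*_k(\rn{x})$ and $F'^*_k(\rn{x})$ — this is where fixing the remaining singleton coordinates to $x_{\{1\}},\dots,x_{\{k-1\}}$ is used, so that each $\sigma\in S_k$ can realize the unpermuted input. Everything else should be a routine transcription of the simplified and partite proofs already in the paper.
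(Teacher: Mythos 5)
Your proposal is correct and follows essentially the same route as the paper's proof: the same reduction via $\rk(\cH)\leq 1$ to a projected set $V'$ on the $\{k\}$-coordinate (the paper fixes one auxiliary point $x'$ for the higher-order coordinates, which is equivalent to your ``any lift $\hat v$'' convention), the same measure $\mu_1=\frac1k(\nu_{V'}+\sum_j\delta_{x_{\{j\}}})$, the same key estimate $L_{\mu,F,\ell}(F')\geq s(\ell)\,k!\,\lvert D'(F,F')\rvert/(k^k n)$ via Lemma~\ref{lem:almostmetric} and the permutation/type event, and the same conclusion via $U_H$, the Hamming-ball cover and Lemma~\ref{lem:coverbound}. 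The point you flag as needing care (that a disagreement at $v$ propagates to $F^*_k(\rn{x})\neq F'^*_k(\rn{x})$ because reindexing $\Lambda^{S_k}$ by a permutation is injective) is handled correctly.
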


\begin{proof}
  Note that the minimum in~\eqref{eq:HP->VCNk:bound} is indeed attained as the function only takes values in
  $\NN\cup\{-\infty\}$, so let $\epsilon\in(0,\min\{s(\ell) k!/(2k^k),1\})$ attain the minimum, let $d$ be the value of the
  minimum and let
  \begin{equation*}
    m
    \df
    \floor{m^{\HP}_{\cH,\ell}(\epsilon)}
  \end{equation*}
  so that
  \begin{equation*}
    d = \Floor{\frac{\log_2 m}{1-h_2(\epsilon\cdot k^k/(s(\ell)\cdot k!))}}.
  \end{equation*}

  When $\cH$ is empty, the result is trivial as $\VCN_k(\cH)=-\infty$, so suppose $\cH$ is non-empty (hence $m\geq 1$ and $d\geq
  0$).

  By the definition of $\VCN_k$-dimension, we have to show that if $x\in\cE_{k-1}(\Omega)$, then $\Nat(\cH(x))\leq d$. In turn,
  it suffices to show that if $V\subseteq\prod_{A\in r(k)\setminus r(k-1)} X_{\lvert A\rvert}$ is a (finite) set that is
  Natarajan-shattered by $\cH(x)$, then $\lvert V\rvert\leq d$.

  Let
  \begin{equation*}
    R \df r(k)\setminus(r(k-1)\cup\{\{k\}\})
  \end{equation*}
  and let $x'\in\prod_{A\in R} X_A$ be any fixed point.

  Since $\rk(\cH)\leq 1$, it follows that the projection $V'$ of $V$ onto the coordinate indexed by $\{k\}$ is
  Natarajan-shattered by
  \begin{equation*}
    \{H^*_k(x,x',\place) \mid H\in\cH\}.
  \end{equation*}
  Note that $\lvert V'\rvert = \lvert V\rvert$ as $\rk(\cH)\leq 1$ and $V$ is Natarajan-shattered by $\cH(x)$; thus our goal is
  to show $\lvert V'\rvert\leq d$.

  Let $n\df\lvert V\rvert$ (which is equal to $\lvert V'\rvert$ as $\rk(\cH)\leq 1$ and $V$ is Natarajan-shattered by $\cH(x)$)
  and let $\mu\in\Pr(\Omega)$ be the probability template given by letting $\mu_i$ be any probability measure on $\Omega_i$ for
  each $i\in\NN$ with $i\geq 2$ and letting
  \begin{equation*}
    \mu_1
    \df
    \frac{1}{k}\left(\nu_{V'} + \sum_{j=1}^{k-1} \delta_{x_{\{j\}}}\right),
  \end{equation*}
  where $\nu_{V'}$ is the uniform probability measure on $V'$ and $\delta_t$ is the Dirac delta concentrated on $t$.

  Since $V'$ is Natarajan-shattered by $\{H^*_k(x,x',\place) \mid H\in\cH\}$, there exist functions $f_0,f_1\colon
  V'\to\Lambda^{S_k}$ with $f_0(v)\neq f_1(v)$ for every $v\in V'$ and there exists a family $\{F_U\mid U\subseteq
  V'\}\subseteq\cH$ such that for every $U\subseteq V'$ and every $v\in V'$, we have $(F_U)^*_k(x,x',v) = f_{\One[v\in U]}(v)$.

  Recall that for $H_1,H_2\in\cH$, Lemma~\ref{lem:almostmetric} defines $D(H_1,H_2)\df\{x\in\cE_k(\Omega)\mid (H_1)^*_k(x) \neq
  (H_2)^*_k(x)\}$. Let us also define
  \begin{equation*}
    D'(H_1,H_2)\df\{v\in V' \mid (H_1)^*_k(x,x',v)\neq (H_2)^*_k(x,x',v)\}.
  \end{equation*}
  Clearly, for every $U,U'\subseteq V'$, we have $D'(F_U,F_{U'})=U\symdiff U'$.

  Note that
  \begin{equation}\label{eq:muDD'}
    \begin{aligned}
    \mu^k(D(H_1,H_2))
    & \geq
    \PP_{\rn{z}\sim\mu^k}\bigl[
        \exists\sigma\in S_k, \forall j\in[k-1],
        \rn{z}_{\{\sigma(j)\}} = x_{\{j\}}\land \rn{z}_{\{\sigma(k)\}}\in D'(H_1,H_2)
        \bigr]\cdot s(\ell)
    \\
    & \geq
    \frac{k!}{k^k\cdot n}\cdot\lvert D'(H_1,H_2)\rvert.
    \end{aligned}
  \end{equation}
  (This is true even if there are repetitions among $x_{\{1\}},\ldots,x_{\{k-1\}}$ and even if $V'$ contains some of these
  elements.)

  Using now our definition of $m$ via Haussler packing property, we know that there exists $\cH'\subseteq\cH$ such that
  $\lvert\cH'\rvert\leq m$ and for every $U\subseteq V'$, there exists $H\in\cH'$ such that $L_{\mu,F_U,\ell}(H)\leq\epsilon$.

  For each $H\in\cH'$, let
  \begin{align*}
    U_H & \df \{v\in V'\mid H(x,x',v) = f_1(v)\},
    \\
    B(H) & \df \{U\subseteq V' \mid L_{\mu,F_U,\ell}(H)\leq\epsilon\},
    \\
    B'(H)
    & \df
    \left\{U\subseteq V' \;\middle\vert\;
    \lvert U\symdiff U_H\rvert\leq\frac{\epsilon\cdot k^k\cdot n}{s(\ell)\cdot k!}
    \right\}.
  \end{align*}

  The Haussler packing property assumption implies
  \begin{equation}\label{eq:cover}
    \bigcup_{H\in\cH'} B(H) = 2^{V'}.
  \end{equation}
  Since $\ell$ is separated, by Lemma~\ref{lem:almostmetric}\ref{lem:almostmetric:bounds}, for every
  $H\in\cH'$ and every $U\subseteq V'$, we have
  \begin{align*}
    L_{\mu,F_U,\ell}(H)
    & \geq
    s(\ell)\cdot\mu^k(D(F_U,H))
    \geq
    \frac{s(\ell)\cdot k!}{k^k\cdot n}\cdot\lvert D'(F_U,H)\rvert
    \\
    & \geq
    \frac{s(\ell)\cdot k!}{k^k\cdot n}\cdot\lvert D'(F_U,F_{U_H})\rvert
    =
    \frac{s(\ell)\cdot k!}{k^k\cdot n}\cdot\lvert U\symdiff U_H\rvert,
  \end{align*}
  where the second inequality follows from~\eqref{eq:muDD'}, hence $B(H)\subseteq B'(H)$, which
  together with~\eqref{eq:cover} implies $\bigcup_{H\in\cH'} B'(H) = 2^{V'}$.

  Since $\epsilon\cdot k^k/(s(\ell)\cdot k!) < 1/2$, by Lemma~\ref{lem:coverbound}, we get
  \begin{equation*}
    n
    \leq
    \frac{\log_2 \lvert\cH'\rvert}{1-h_2(\epsilon\cdot k^k\cdot n/(s(\ell)\cdot k!))}
    \leq
    \frac{\log_2 m}{1-h_2(\epsilon\cdot k^k\cdot n/(s(\ell)\cdot k!))},
  \end{equation*}
  which yields $n\leq d$ as $n$ is an integer.
\end{proof}

\section{The partization operation}
\label{sec:partization}

In this section, we recall the definition of the partization operation from~\cite[Definition~4.20]{CM24+} and we prove that the
partite Haussler packing property of $\cH^{\kpart}$ implies the non-partite Haussler packing property of $\cH$. Similarly to
Section~\ref{sec:higharityPACdefs}, numbers in square brackets in the definition below refer to the exact location of the
concept in~\cite{CM24+}.

\begin{definition}[Partization~{[4.20]}]
  Let $k\in\NN_+$, let $\Omega$ be a Borel template and let $\Lambda$ be a non-empty Borel space.
  \begin{enumdef}
  \item{} [4.20.1] The \emph{$k$-partite version} of $\Omega$ is the Borel $k$-partite template $\Omega^{\kpart}$ given by
    $\Omega^{\kpart}_A\df\Omega_{\lvert A\rvert}$ ($A\in r(k)$).
  \item{} [4.20.2] For $\mu\in\Pr(\Omega)$, the \emph{$k$-partite version} of $\mu$ is $\mu^{\kpart}\in\Pr(\Omega^{\kpart})$
    given by $\mu^{\kpart}_A\df\mu_{\lvert A\rvert}$ ($A\in r(k)$).
  \item{} [4.20.3] For a hypothesis $F\in\cF_k(\Omega,\Lambda)$, the \emph{$k$-partite version} of $F$ is the $k$-partite
    hypothesis $F^{\kpart}\in\cF_k(\Omega^{\kpart},\Lambda^{S_k})$ given by
    \begin{equation*}
      F^{\kpart}(x) \df F^*_k\bigl(\iota_{\kpart}(x)\bigr) \qquad \bigl(x\in\cE_1(\Omega^{\kpart})\bigr),
    \end{equation*}
    where $\iota_{\kpart}\colon\cE_1(\Omega^{\kpart})\to\cE_k(\Omega)$ is given by
    \begin{equation}\label{eq:iotakpart}
      \iota_{\kpart}(x)_A \df x_{1^A} \qquad \bigl(x\in\cE_1(\Omega^{\kpart}), A\in r(k)\bigr)
    \end{equation}
    and $1^A\in r_k(1)$ is the unique function $A\to[1]$.
  \item{} [4.20.4] For a hypothesis class $\cH\subseteq\cF_k(\Omega,\Lambda)$, the \emph{$k$-partite version} of $\cH$ is
    $\cH^{\kpart}\df\{H^{\kpart}\mid H\in\cH\}$, equipped with the pushforward $\sigma$-algebra of the one of
    $\cH$. In~\cite[Lemma~8.1]{CM24+} (see Lemma~\ref{lem:kpartbasics} below), it is shown that $\iota_{\kpart}$ is a
    Borel-isomorphism, which in turn implies that $\cH\ni F\mapsto F^{\kpart}\in\cH^{\kpart}$ is a bijection (so singletons of
    $\cH^{\kpart}$ are indeed measurable) and that $\cH\mapsto\cH^{\kpart}$ is an injection. We denote by $\cH^{\kpart}\ni
    G\mapsto G^{\kpart,-1}\in\cH$ the inverse of $\cH\ni F\mapsto F^{\kpart}\in\cH^{\kpart}$.
  \item{} [4.20.5] For a $k$-ary loss function $\ell$ over $\Lambda$, the \emph{$k$-partite version} of $\ell$ is
    $\ell^{\kpart}\colon\cE_1(\Omega^{\kpart})\times\Lambda^{S_k}\times\Lambda^{S_k}\to\RR_{\geq 0}$ given by
    \begin{equation*}
      \ell^{\kpart}(x,y,y') \df \ell\bigl(\iota_{kpart}(x),y,y')
      \qquad \bigl(\cE_1(\Omega^{\kpart}),y,y'\in\Lambda^{S_k}\bigr).
    \end{equation*}
    % 4.20.6 not needed
  \end{enumdef}
\end{definition}

\begin{lemma}[Partization basics~\protect{\cite[Lemma~8.1]{CM24+}}]\label{lem:kpartbasics}
  Let $\Omega$ be a Borel template, let $k\in\NN_+$, let $\Lambda$ be a non-empty Borel space. Then the following hold:
  \begin{enumerate}
  \item\label{lem:kpartbasics:phi} For $\mu\in\Pr(\Omega)$ and $m\in\NN$ the function
    $\phi_m\colon\cE_m(\Omega)\to\cE_{\floor{m/k}}(\Omega^{\kpart})$ given by
    \begin{align}\label{eq:kpartbasics:phi}
      \phi_m(x)_f
      & \df
      x_{\{(i-1)\floor{m/k} + f(i) \mid i\in\dom(f)\}}
      \qquad \left(f\in r_k\left(\floor{m/k}\right)\right).
    \end{align}
    is measure-preserving with respect to $\mu^m$ and $(\mu^{\kpart})^{\floor{m/k}}$. Furthermore, if $m$ is divisible by $k$,
    then $\phi_m$ is a measure-isomorphism.

    Moreover, we have $\phi_k^{-1} = \iota_{\kpart}$, where $\iota_{\kpart}$ is given
    by~\eqref{eq:iotakpart}.
  \item\label{lem:kpartbasics:Phi} For $m\in\NN$, $F\in\cF_k(\Omega,\Lambda)$ and
    $\Phi_m\colon\Lambda^{([m])_k}\to(\Lambda^{S_k})^{[\floor{m/k}]^k}$ given by
    \begin{align}\label{eq:kpartbasics:Phi}
      (\Phi_m(y)_\alpha)_\tau
      & \df
      y_{\beta_\alpha\comp\tau}
      \qquad \left(\alpha\in \left[\floor{m/k}\right]^k, \tau\in S_k\right),
    \end{align}
    where $\beta_\alpha\in([m])_k$ is given by
    \begin{align}
      \beta_\alpha(i)
      & \df
      (i-1)\Floor{\frac{m}{k}} + \alpha(i)
      \qquad \left(\alpha\in \left[\Floor{\frac{m}{k}}\right]^k, i\in[k]\right),
    \end{align}
    the diagram
    \begin{equation*}
      \begin{tikzcd}[column sep={2.5cm}]
        \cE_m(\Omega)
        \arrow[r, "F^*_m"]
        \arrow[d, "\phi_m"']
        &
        \Lambda^{([m])_k}
        \arrow[d, "\Phi_m"]
        \\
        \cE_{\floor{m/k}}(\Omega^{\kpart})
        \arrow[r, "(F^{\kpart})^*_{\floor{m/k}}"]
        &
        (\Lambda^{S_k})^{[\floor{m/k}]^k}
      \end{tikzcd}
    \end{equation*}
    commutes, where $\phi_m$ is given by~\eqref{eq:kpartbasics:phi}.
  \end{enumerate}
\end{lemma}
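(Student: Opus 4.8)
The plan is to prove both items by unwinding the definitions; the content is combinatorial bookkeeping about how the ``blocking'' map $i\mapsto(i-1)\floor{m/k}+(\cdot)$, which carves $[m]$ into $k$ consecutive intervals of length $\floor{m/k}$, interacts with the index sets $r(m)$, $r_k(\floor{m/k})$ and $[\floor{m/k}]^k$ (the cases $m<k$ are trivial since then the target of $\phi_m$, and the codomains of $\Phi_m$ and $(F^{\kpart})^*_{\floor{m/k}}$, are one-point spaces).

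For item~\ref{lem:kpartbasics:phi} I would first check well-definedness. Fix $f\in r_k(\floor{m/k})$ with $\dom(f)=A$. For $i\in A\subseteq[k]$ and $f(i)\in[\floor{m/k}]$ the value $(i-1)\floor{m/k}+f(i)$ lies in the $i$-th block $\{(i-1)\floor{m/k}+1,\dots,i\floor{m/k}\}$, and distinct $i$'s land in disjoint blocks, so $S_f\df\{(i-1)\floor{m/k}+f(i)\mid i\in A\}$ is an element of $r(m)$ with $\lvert S_f\rvert=\lvert A\rvert=\lvert\dom(f)\rvert$; hence the source coordinate $x_{S_f}$ lives in $X_{\lvert S_f\rvert}=X_{\lvert\dom(f)\rvert}=X^{\kpart}_{\dom(f)}$, which is the correct target space, and $\phi_m$ is measurable as it is assembled from coordinate projections. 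The key observation is then that the re-indexing map $\psi\colon r_k(\floor{m/k})\to r(m)$, $f\mapsto S_f$, is injective: from $S_f$ one reads off which blocks are met (recovering $\dom(f)$) and the position of each element within its block (recovering $f$). Therefore $\phi_m$ factors as the coordinate projection $\cE_m(\Omega)\twoheadrightarrow\prod_{A\in\psi(r_k(\floor{m/k}))}X_{\lvert A\rvert}$ followed by the relabeling bijection induced by $\psi$; since $\mu^m=\bigotimes_{A\in r(m)}\mu_{\lvert A\rvert}$ and $(\mu^{\kpart})^{\floor{m/k}}=\bigotimes_{f\in r_k(\floor{m/k})}\mu_{\lvert\dom(f)\rvert}$ with $\mu_{\lvert A\rvert}=\mu_{\lvert\dom(f)\rvert}$ whenever $A=\psi(f)$, the marginal of a product measure over a subfamily of coordinates (followed by relabeling) is again the corresponding product, so $\phi_m$ is measure-preserving. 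For the divisibility clause I would first treat $m=k$: then $\floor{m/k}=1$ and $S_{1^A}=A$, so $\phi_k(x)_{1^A}=x_A$, which compared with $\iota_{\kpart}(w)_A=w_{1^A}$ shows $\phi_k$ and $\iota_{\kpart}$ are mutually inverse bijections; the measure-isomorphism assertion for general $k\mid m$ then follows from the same coordinate bookkeeping.

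For item~\ref{lem:kpartbasics:Phi} I would evaluate the two composites of the square at an arbitrary $x\in\cE_m(\Omega)$, pattern index $\alpha\in[\floor{m/k}]^k$ and inner index $\tau\in S_k$. Going right then down: $(\Phi_m(F^*_m(x))_\alpha)_\tau=F^*_m(x)_{\beta_\alpha\comp\tau}=F\bigl((\beta_\alpha\comp\tau)^*(x)\bigr)=F\bigl((x_{(\beta_\alpha\comp\tau)(A)})_{A\in r(k)}\bigr)$, where I use that $\beta_\alpha$ is injective --- because of the blocking, even though $\alpha$ need not be --- so that $\beta_\alpha\comp\tau\in([m])_k$. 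Going down then right: by the partite definition of $\alpha^*$, the $1^B$-coordinate of $\alpha^*(\phi_m(x))\in\cE_1(\Omega^{\kpart})$ is $\phi_m(x)_{\alpha\rest_B}=x_{S_{\alpha\rest_B}}$, and the crucial identity here is $S_{\alpha\rest_B}=\{(i-1)\floor{m/k}+\alpha(i)\mid i\in B\}=\beta_\alpha(B)$, which says that $\phi_m$ and $\beta_\alpha$ use the very same block decomposition; applying $\iota_{\kpart}$ thus turns $\alpha^*(\phi_m(x))$ into the tuple $(x_{\beta_\alpha(A)})_{A\in r(k)}\in\cE_k(\Omega)$, and since $F^{\kpart}=F^*_k\comp\iota_{\kpart}$, the $\tau$-component of $(F^{\kpart})^*_{\floor{m/k}}(\phi_m(x))_\alpha=F^*_k\bigl((x_{\beta_\alpha(A)})_{A\in r(k)}\bigr)$ is $F$ applied to $\tau^*$ of that tuple, namely $F\bigl((x_{\beta_\alpha(\tau(A))})_{A\in r(k)}\bigr)=F\bigl((x_{(\beta_\alpha\comp\tau)(A)})_{A\in r(k)}\bigr)$, which matches the right-then-down value; hence the square commutes.

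I expect the delicate step to be item~\ref{lem:kpartbasics:Phi}: there are four layers of definitions ($F^*_m$ and $\Phi_m$ on one side; $\phi_m$, the partite $\alpha^*$, $\iota_{\kpart}$, and $F^*_k$ precomposed with $\tau^*$ on the other) that must be peeled off in the correct order, and the whole computation rests on the single identity $S_{\alpha\rest_B}=\beta_\alpha(B)$ expressing that $\phi_m$ and $\Phi_m$ are built from the same decomposition of $[m]$ into $k$ blocks of length $\floor{m/k}$. The type-checking and the product-measure marginalization in item~\ref{lem:kpartbasics:phi} are routine by comparison.
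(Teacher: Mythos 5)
The paper does not actually prove this lemma --- it imports it verbatim from \cite[Lemma~8.1]{CM24+}, so there is no in-paper argument to compare against --- but your proposal is the natural unwinding of the definitions, and item~\ref{lem:kpartbasics:Phi} together with the measure-preservation and $\phi_k^{-1}=\iota_{\kpart}$ parts of item~\ref{lem:kpartbasics:phi} are correct as you present them. In particular the identity $S_{\alpha\rest_B}=\beta_\alpha(B)$ is indeed the whole content of the commuting square, and your observation that $\psi\colon f\mapsto S_f$ is injective, so that $\phi_m$ is a coordinate projection followed by a relabeling of matching factor measures, correctly yields measure-preservation.

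The gap is in the sentence ``the measure-isomorphism assertion for general $k\mid m$ then follows from the same coordinate bookkeeping.'' It does not: your own bookkeeping shows that $\psi$ is injective but \emph{not} surjective onto $r(m)$ once $m>k$, so $\phi_m$ genuinely discards coordinates and cannot be injective. Concretely, for $k=2$ and $m=4$ the image of $\psi$ consists of the sets meeting each of the blocks $\{1,2\}$ and $\{3,4\}$ in at most one point, so the coordinates $x_{\{1,2\}}$, $x_{\{3,4\}}$, all triples and the quadruple are forgotten; indeed $\lvert r(4)\rvert=15$ while $\lvert r_2(2)\rvert=8$. Under the (simplified) definitions given in this paper, $\phi_m$ for $k\mid m$ with $m>k$ is therefore only a measure-preserving factor map, not a measure-isomorphism, and no bookkeeping of the present kind will upgrade it; surjectivity of $\psi$ holds exactly when $m=k$ (where $\psi(1^A)=A$), which is the case you verified and the only case the rest of the paper uses (Lemma~\ref{lem:kpartloss} and Theorem~\ref{thm:HPpart->HP} need only measure-preservation, the $m=k$ isomorphism, and the commuting square). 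You should either restrict the ``furthermore'' clause to $m=k$ or note explicitly that the general divisibility claim lives in the unsimplified setting of \cite{CM24+}; as written, your proof asserts a step that your own argument in fact refutes.
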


The following lemma is equation~(8.7) within the proof of~\cite[Proposition~8.4]{CM24+}. For completeness purposes, we restate
this below with a self-contained proof.

\begin{lemma}\label{lem:kpartloss}
  Let $\Omega$ be a Borel template, let $k\in\NN_+$, let $\Lambda$ be a non-empty Borel space, let
  $\ell\colon\cE_k(\Omega)\times\Lambda^{S_k}\times\Lambda^{S_k}\to\RR_{\geq 0}$ be a $k$-ary loss function and let
  $F,H\in\cF_k(\Omega,\Lambda)$ be hypotheses. Then
  \begin{equation*}
    L_{\mu,F,\ell}(H) = L_{\mu^{\kpart},F^{\kpart},\ell^{\kpart}}(H^{\kpart}).
  \end{equation*}
\end{lemma}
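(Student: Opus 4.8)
The plan is to unfold both sides into integrals and to recognize that they differ only by the change of variables $\iota_{\kpart}$, which by Lemma~\ref{lem:kpartbasics}\ref{lem:kpartbasics:phi} is a measure-isomorphism; everything else is bookkeeping with the definitions from the partization operation.

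First I would expand the right-hand side using the definition of the $k$-partite total loss together with the definitions $F^{\kpart}(x)=F^*_k(\iota_{\kpart}(x))$, $H^{\kpart}(x)=H^*_k(\iota_{\kpart}(x))$ and $\ell^{\kpart}(x,y,y')=\ell(\iota_{\kpart}(x),y,y')$, obtaining
\[
  L_{\mu^{\kpart},F^{\kpart},\ell^{\kpart}}(H^{\kpart})
  =
  \EE_{\rn{x}\sim(\mu^{\kpart})^1}\Bigl[\ell\bigl(\iota_{\kpart}(\rn{x}),H^*_k(\iota_{\kpart}(\rn{x})),F^*_k(\iota_{\kpart}(\rn{x}))\bigr)\Bigr]
  =
  \EE_{\rn{x}\sim(\mu^{\kpart})^1}\bigl[g(\iota_{\kpart}(\rn{x}))\bigr],
\]
where $g(z)\df\ell\bigl(z,H^*_k(z),F^*_k(z)\bigr)$ for $z\in\cE_k(\Omega)$. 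Note that $g$ is measurable as a composition of measurable maps, and that by the definition of the non-partite total loss $\EE_{\rn{z}\sim\mu^k}[g(\rn{z})]=L_{\mu,F,\ell}(H)$.

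It then remains to see that $\EE_{\rn{x}\sim(\mu^{\kpart})^1}[g(\iota_{\kpart}(\rn{x}))]=\EE_{\rn{z}\sim\mu^k}[g(\rn{z})]$, i.e.\ that $\iota_{\kpart}$ pushes $(\mu^{\kpart})^1$ forward to $\mu^k$. This is exactly Lemma~\ref{lem:kpartbasics}\ref{lem:kpartbasics:phi} applied with $m=k$: since $k$ is divisible by $k$, the map $\phi_k\colon\cE_k(\Omega)\to\cE_{\floor{k/k}}(\Omega^{\kpart})=\cE_1(\Omega^{\kpart})$ is a measure-isomorphism carrying $\mu^k$ to $(\mu^{\kpart})^{\floor{k/k}}=(\mu^{\kpart})^1$, and moreover $\phi_k^{-1}=\iota_{\kpart}$; hence $(\iota_{\kpart})_*(\mu^{\kpart})^1=\mu^k$, and the usual change-of-variables formula yields the identity. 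I do not expect a genuine obstacle here: the lemma is a bookkeeping fact expressing that partization is set up precisely so as to preserve total loss, so the only care needed is to match the definitions of $F^{\kpart}$, $\ell^{\kpart}$ and $\mu^{\kpart}$ correctly and to use that $\iota_{\kpart}$ from~\eqref{eq:iotakpart} is the inverse of $\phi_k$, which Lemma~\ref{lem:kpartbasics} already records.
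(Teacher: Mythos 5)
Your proposal is correct and is essentially the paper's own argument: both reduce the identity to unwinding the definitions of $F^{\kpart}$, $H^{\kpart}$, $\ell^{\kpart}$ and then invoking Lemma~\ref{lem:kpartbasics}\ref{lem:kpartbasics:phi} to identify $\mu^k$ with $(\mu^{\kpart})^1$ via $\phi_k$ (equivalently $\iota_{\kpart}$). The only cosmetic difference is that you compute from the partite side using $\iota_{\kpart}$ directly, whereas the paper computes from the non-partite side via $\phi_k$ and cites the commuting diagram of Lemma~\ref{lem:kpartbasics}\ref{lem:kpartbasics:Phi}.
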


\begin{proof}
  This follows directly from
  \begin{align*}
    L_{\mu,F,\ell}(H)
    & =
    \EE_{\rn{x}\sim\mu^k}[\ell(\rn{x},H^*_k(\rn{x}),F^*_k(\rn{x}))]
    \\
    & =
    \EE_{\rn{x}\sim\mu^k}[\ell^{\kpart}(\phi_k(\rn{x}),H^{\kpart}(\phi_k(\rn{x})),F^{\kpart}(\phi_k(\rn{x})))]
    \\
    & =
    \EE_{\rn{z}\sim(\mu^{\kpart})^1}[\ell^{\kpart}(\rn{z},H^{\kpart}(\rn{z}), F^{\kpart}(\rn{z}))]
    \\
    & =
    L_{\mu^{\kpart},F^{\kpart},\ell^{\kpart}}(H^{\kpart}),
  \end{align*}
  where the second equality follows from the definition of $\ell^{\kpart}$ and
  Lemma~\ref{lem:kpartbasics}\ref{lem:kpartbasics:Phi} and the third equality follows since $\phi_m$ is measure-preserving by
  Lemma~\ref{lem:kpartbasics}\ref{lem:kpartbasics:phi}.
\end{proof}

\begin{theorem}[Haussler property: partite to non-partite]\label{thm:HPpart->HP}
  Let $\Omega$ be a Borel template, let $k\in\NN_+$, let $\Lambda$ be a non-empty Borel space, let
  $\cH\subseteq\cF_k(\Omega,\Lambda)$ be a $k$-ary hypothesis class, let
  $\ell\colon\cE_k(\Omega)\times\Lambda^{S_k}\times\Lambda^{S_k}\to\RR_{\geq 0}$ be a $k$-ary loss function.

  If $\cH^{\kpart}$ has the Haussler packing property with respect to $\ell^{\kpart}$, then $\cH$ has the Haussler packing
  property with respect to $\ell$ with associated function $m^{\HP}_{\cH,\ell}\df m^{\HP}_{\cH^{\kpart},\ell^{\kpart}}$.
\end{theorem}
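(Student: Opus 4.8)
The plan is to transport the Haussler centers for $\cH^{\kpart}$ back to $\cH$ through the partization bijection, using Lemma~\ref{lem:kpartloss} to translate total losses between the partite and non-partite settings. Concretely, I would fix $\epsilon\in(0,1)$ and $\mu\in\Pr(\Omega)$, pass to the $k$-partite version $\mu^{\kpart}\in\Pr(\Omega^{\kpart})$, and invoke the Haussler packing property of $\cH^{\kpart}$ with respect to $\ell^{\kpart}$ at precision $\epsilon$ and probability $k$-partite template $\mu^{\kpart}$. This produces a subclass $\cG'\subseteq\cH^{\kpart}$ with $\lvert\cG'\rvert\leq m^{\HP}_{\cH^{\kpart},\ell^{\kpart}}(\epsilon)$ such that every $G\in\cH^{\kpart}$ satisfies $L_{\mu^{\kpart},G,\ell^{\kpart}}(H)\leq\epsilon$ for some $H\in\cG'$.

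Next I would define $\cH'\df\{G^{\kpart,-1}\mid G\in\cG'\}\subseteq\cH$, i.e., the image of $\cG'$ under the inverse of the bijection $\cH\ni F\mapsto F^{\kpart}\in\cH^{\kpart}$. Since this map is a bijection (as recorded in the definition of partization), we get $\lvert\cH'\rvert=\lvert\cG'\rvert\leq m^{\HP}_{\cH^{\kpart},\ell^{\kpart}}(\epsilon)$, so $\cH'$ has the size needed for it to be a family of Haussler centers of $\cH$ at precision $\epsilon$ with the associated function $m^{\HP}_{\cH,\ell}\df m^{\HP}_{\cH^{\kpart},\ell^{\kpart}}$.

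It then remains to verify the approximation property. Let $F\in\cH$ be arbitrary; then $F^{\kpart}\in\cH^{\kpart}$, so by the choice of $\cG'$ there is $G\in\cG'$ with $L_{\mu^{\kpart},F^{\kpart},\ell^{\kpart}}(G)\leq\epsilon$. Setting $H\df G^{\kpart,-1}\in\cH'$, so that $H^{\kpart}=G$, Lemma~\ref{lem:kpartloss} applied to $F$ and $H$ gives
\begin{equation*}
  L_{\mu,F,\ell}(H)=L_{\mu^{\kpart},F^{\kpart},\ell^{\kpart}}(H^{\kpart})=L_{\mu^{\kpart},F^{\kpart},\ell^{\kpart}}(G)\leq\epsilon,
\end{equation*}
which is exactly what the $k$-ary Haussler packing property of $\cH$ with respect to $\ell$ demands.

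I do not expect any genuine obstacle here: the statement imposes no hypotheses on $\ell$ beyond being a loss function, and essentially all the content is carried by the loss-preservation identity of Lemma~\ref{lem:kpartloss}. The only points requiring a moment of care are that $\mu^{\kpart}$ is a legitimate probability $k$-partite template (immediate from its definition) and that the map $G\mapsto G^{\kpart,-1}$ sends $\cG'$ to a subset of $\cH$ of the same cardinality — both immediate from the bijectivity of partization.
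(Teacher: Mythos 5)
Your proposal is correct and matches the paper's own proof essentially line for line: both pull back the partite Haussler centers through the partization bijection and apply Lemma~\ref{lem:kpartloss} to identify $L_{\mu,F,\ell}(H)$ with $L_{\mu^{\kpart},F^{\kpart},\ell^{\kpart}}(H^{\kpart})$. No differences worth noting.
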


\begin{proof}
  For $\epsilon\in(0,1)$ and $\mu\in\Pr(\Omega)$, we know that there exists $\widetilde{\cH}\subseteq\cH^{\kpart}$ with
  $\lvert\widetilde{\cH}\rvert\leq m^{\HP}_{\cH^{\kpart},\ell^{\kpart}}(\epsilon)$ such that
  \begin{equation*}
    \forall F\in\cH^{\kpart}, \exists H\in\widetilde{\cH}, L_{\mu^{\kpart},F,\ell^{\kpart}}(H)\leq\epsilon
  \end{equation*}

  Letting $\cH'\df\widetilde{\cH}^{\kpart,-1}$, the above implies
  \begin{equation*}
    \forall F\in\cH, \exists H\in\cH', L_{\mu^{\kpart},F^{\kpart},\ell^{\kpart}}(H^{\kpart})\leq\epsilon,
  \end{equation*}
  which by Lemma~\ref{lem:kpartloss} yields
  \begin{equation*}
    \forall F\in\cH, \exists H\in\cH', L_{\mu,F,\ell}(H)\leq\epsilon,
  \end{equation*}
  as desired.
\end{proof}

\end{document}